\documentclass[lettersize,journal]{IEEEtran}
\usepackage{amsmath,amsfonts}
\usepackage{algorithmic}
\usepackage{array}
\usepackage[caption=false,font=normalsize,labelfont=sf,textfont=sf]{subfig}
\usepackage{textcomp}
\usepackage{stfloats}
\usepackage{url}
\usepackage{verbatim}
\usepackage{graphicx}
\def\BibTeX{{\rm B\kern-.05em{\sc i\kern-.025em b}\kern-.08em
    T\kern-.1667em\lower.7ex\hbox{E}\kern-.125emX}}
\usepackage{balance}

\usepackage{microtype}
\usepackage{graphicx}
\usepackage{subcaption}
\usepackage{booktabs}

\usepackage{hyperref}

\usepackage{amsmath}
\usepackage{amssymb}
\usepackage{mathtools}
\usepackage{amsthm}
\usepackage{multirow}

\usepackage{algorithmic}
\usepackage{algorithm}

\theoremstyle{plain}
\newtheorem{theorem}{Theorem}[section]

\theoremstyle{definition}
\newtheorem{definition}[theorem]{Definition}
\newtheorem{assumption}[theorem]{Assumption}
\theoremstyle{remark}

\begin{document}
\title{Multiview Self-Representation Learning across Heterogeneous Views}
\author{
Jie~Chen, ~\IEEEmembership{Member,~IEEE,}
Zhu~Wang,
Chuanbin~Liu,
and Xi~Peng, ~\IEEEmembership{Senior~Member,~IEEE}
\IEEEcompsocitemizethanks{
\IEEEcompsocthanksitem J. Chen and X. Peng are with the College of Computer Science, Sichuan University,
Chengdu 610065, China. E-mail: chenjie2010@scu.edu.cn; pengx.gm@gmail.com.
\IEEEcompsocthanksitem C. Liu is with School of Economics and Management, China University of Petroleum (Beijing), Beijing, 100086, China. E-mail: liuchuanbingl@163.com.
\IEEEcompsocthanksitem Z. Wang is with the Law School, Sichuan University,
Chengdu 610065, China. E-mail: wangzhu@scu.edu.cn.}
\thanks{Manuscript received \today. This work was supported in part by National Natural Science Foundation of China (NSFC) under Grants 62576231, 62572331 and 72374089, in part by the Natural Science Foundation of Sichuan Province under Grant 2026NSFSC0420, and in part by the Fundamental Research Funds for the Central Universities under Grant 2462025YJRC038.}}

\markboth{Journal of \LaTeX\ Class Files,~Vol.~18, No.~9, February~2026}%
{How to Use the IEEEtran \LaTeX \ Templates}

\maketitle

\begin{abstract}
Features of the same sample generated by different pretrained models often exhibit inherently distinct feature distributions because of discrepancies in the model pretraining objectives or architectures. Learning invariant representations from large-scale unlabeled visual data with various pretrained models in a fully unsupervised transfer manner remains a significant challenge. In this paper, we propose a multiview self-representation learning (MSRL) method in which invariant representations are learned by exploiting the self-representation property of features across heterogeneous views. The features are derived from large-scale unlabeled visual data through transfer learning with various pretrained models and are referred to as heterogeneous multiview data. An individual linear model is stacked on top of its corresponding frozen pretrained backbone. We introduce an information-passing mechanism that relies on self-representation learning to support feature aggregation over the outputs of the linear model. Moreover, an assignment probability distribution consistency scheme is presented to guide multiview self-representation learning by exploiting complementary information across different views. Consequently, representation invariance across different linear models is enforced through this scheme. In addition, we provide a theoretical analysis of the information-passing mechanism, the assignment probability distribution consistency and the incremental views. Extensive experiments with multiple benchmark visual datasets demonstrate that the proposed MSRL method consistently outperforms several state-of-the-art approaches.
\end{abstract}

\begin{IEEEkeywords}
Multiview learning, self-representation learning, unsupervised transfer learning, information-passing mechanism.
\end{IEEEkeywords}

\section{Introduction}
\label{sec:Introduction}
\IEEEPARstart{L}{earning} invariant representations from large-scale visual data with manual annotations often yields human-level performance on downstream tasks such as image recognition \cite{Hu2024PLA}, object detection \cite{Hegde2025M3D}, and visual tracking \cite{Pang2025LVI}. However, obtaining such annotations is extremely expensive and may not be necessary for deep representation learning \cite{Gui2024SSL}. Learning invariant representations from large-scale visual data in a fully unsupervised transfer manner presents a significant challenge.

Transfer learning is a fundamental knowledge-transfer technique that in which pretrained deep neural network models are used to improve the performance of the models on target data \cite{You2021LOGME}. Numerous pretrained deep neural network models can provide favorable initialization through transfer learning \cite{Liu2022IFT, Abuduweili2021ACR}. The invariant representations learned from large-scale visual data by fine-tuning the entire model can be effectively transferred to various downstream tasks \cite{Darcet2024VT, HaoChen2022BS}. These methods have shown significant advantages over traditional supervised learning methods. However, substantial performance improvements are observed when the entire vision model is fine-tuned on downstream tasks. In addition, such approaches still require a small number of labeled samples.

Several unsupervised transfer learning methods have been introduced to infer how samples should be labeled in downstream tasks without the need for task-specific representation learning \cite{Gadetsky2025UT, Alkin2025CL}. For example, Alkin \textit{et al.} \cite{Alkin2025CL} introduced a masked image modeling refiner method in which the later blocks of a pretrained model are refined to improve the pretraining objective. Gadetsky \textit{et al.} \cite{Gadetsky2025UT} proposed an unsupervised transfer learning method to maximize the margins of independent linear classifiers on top of frozen pretrained models. By leveraging foundation vision-language models, these fully unsupervised transfer methods often achieve human-level performance on downstream tasks and outperform fine-tuning-based approaches \cite{Chen2025ETL, Mirza2023FT}. However, a significant limitation of these methods lies in the effective capture of complementary information across multiple pretrained models when features are derived from visual data. Complementary information provides additional semantic knowledge preserved across different models. Therefore, learning invariant representations from different pretrained models requires further investigation into how to effectively exploit complementary information across multiple views.

In recent years, contrastive learning has emerged as the dominant paradigm for unsupervised visual representation learning \cite{Du2024PCL, Tian2024LV, HaoChen2022BS}. Contrastive learning aims to learn representations by maximizing a lower bound on the mutual information between the augmented views of visual data, where these augmented views provide complementary information. The objective losses used in contrastive learning techniques, such as InfoNCE \cite{Oord2019RL} and Deep InfoMax \cite{Hjelm2021MI}, explicitly increase the separation between dissimilar pairs of the augmented views while reducing the separation between similar pairs. Many contrastive learning-based methods leverage the generalizability of contrastive learning to obtain compact representations without the need for manually assigned labels \cite{Huang2024CL, Wang2020UCRL, Tian2020GV}. These methods rapidly reduce the performance gap relative to that of supervised learning in large-scale visual data. A pair of augmented views of the same sample is encoded by the same contrastive learning-based model, resulting in representations that follow a shared underlying distribution \cite{Chen2020CL}. However, features of the same sample generated by different pretrained models may originate from distinct underlying distributions. Therefore, there is an urgent need to learn invariant representations from large-scale unlabeled visual data across heterogeneous views with pretrained models.

In this paper, we propose a multiview self-representation learning (MSRL) method that learns invariant representations from large-scale unlabeled visual data in a fully unsupervised transfer manner. The features are derived from the visual data through transfer learning with different pretrained models. These features are referred to as heterogeneous multiview data. An individual linear model is stacked on top of the corresponding frozen pretrained backbone. We introduce an information-passing mechanism in which self-representation learning is used to aggregate features output by the linear model. Specifically, self-representation learning exploits the self-representation property of features to define a feature aggregation operator that adaptively selects neighborhoods of varying sizes while maintaining the linear weights shared among the features. Each representation corresponding to a feature can be represented by the linear combination of its spatially adjacent neighbors. To learn invariant representations, we present an assignment probability distribution consistency scheme to guide multiview self-representation learning. The assignment probability distributions produced by the linear models contain complementary information across different views, which can provide additional semantic knowledge for clustering assignments. By exploiting complementary information across different views, representation invariance across different views is enforced through the assignment probability distribution consistency scheme. In addition, we provide a theoretical analysis of the information-passing mechanism, the assignment probability distribution consistency and the incremental views. Extensive experiments show that the proposed MSRL method consistently outperforms state-of-the-art approaches across multiple benchmark visual datasets.

The key contributions are summarized as follows.
\begin{itemize}
\item{An MSRL model is introduced to learn invariant representations from multiple views of large-scale visual data in a fully unsupervised transfer manner.}
\item{An information-passing mechanism is developed to adaptively aggregate feature information. The most linearly related features within the same category are adaptively selected to yield low-dimensional vector representations.}
\item{An assignment probability distribution consistency scheme is introduced for multiview self-representation learning, which exploits additional semantic knowledge to effectively enforce representation invariance across different views.}
\end{itemize}

\begin{figure*}[!htbp]
\centering
\includegraphics[width=0.85\linewidth]{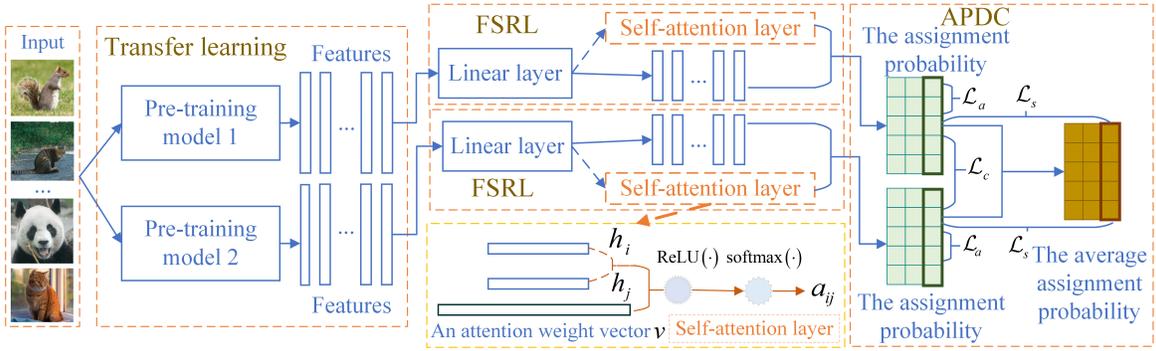}
\caption{Framework of the MSRL model with two pretrained models. The framework consists of three main modules: a transfer learning module, a feature self-representation learning (FSRL) module and an assignment probability distribution consistency (APDC) module.}
\label{fig:architecture} %% label for entire figure
\end{figure*}

\section{Related Work}
\label{sec:work}

\subsection{Unsupervised Transfer Learning}
In transfer learning, fine-tuning a pretrained model for new downstream tasks has attracted considerable attention because of its demonstrated efficiency and generalizability \cite{Abuduweili2021ACR}. However, recent studies have shown that fine-tuning an entire pretrained model on downstream tasks usually leads to faster convergence but results in only marginal improvements over unsupervised transfer learning \cite{Gadetsky2025UT, HaoChen2022BS}. Unsupervised transfer learning has become a fundamental paradigm in unsupervised visual representation learning, in which pretrained models are employed to enhance performance on target-specific downstream tasks without the need for labeled samples  \cite{Alkin2025CL, Chen2025ETL, Mirza2023FT}. Numerous fully unsupervised transfer methods can be divided into two main categories: training a linear classifier on representations generated by frozen pretrained models and parameter-efficient transfer learning techniques that adapt pretrained models to downstream tasks while a limited set of parameters is trained \cite{Alkin2025CL, Xin2024VMT}. These methods achieve significant performance improvements by facilitating the adaptation of pre-trained models to downstream tasks.

\subsection{Multiview Representation Learning}
The self-representation property of data samples can be employed to evaluate the membership among data samples if a linear relationship exists among data samples from the same category \cite{Chen2025HSRC, Chen2022MLRR}. Specifically, a sample can be represented as a linear combination of its neighboring samples from the same category. For example, Chen \textit{et al.} proposed an approximated local linear representation to learn sparse representations on the basis of locality-constrained linear representation learning with probabilistic simplex constraints \cite{Chen2023ESR}. Multiview data typically provide both consistent and complementary information across different views of individual data samples. A variety of multiview representation learning methods leverage the self-representation property of data samples to learn a shared representation across multiple views \cite{Fu2022LRTL, Chen2022MLRR}. However, these traditional shallow self-representation models are inadequate for learning discriminative features from large-scale datasets. In contrast, deep self-representation learning models, such as deep autoencoders \cite{Liu205MVC, Chen2023DMC} and graph neural networks \cite{Fu2025LGL, Deng2025CGC, Veli2018GAT}, exhibit a strong ability to capture intrinsic features within large-scale data. In addition, multiview data generated through data augmentation strategies have been extensively explored in multiview contrastive representation learning approaches \cite{Wen2024DMVG, Luo2024SCMVC}. These studies indicate that integrating self-representation learning with deep multiview learning paradigms is crucial for effectively capturing the intrinsic structure of large-scale visual data.

\section{Multiview Self-Representation Learning via Unsupervised Transfer Learning}
\label{sec:sscd}

\subsection{Problem Formulation}
Given a set of $M$ pretrained models $\left\{ {{\phi _l}} \right\}_{l = 1}^L$ and an unlabeled dataset $\mathcal{D} = \left\{ {\mathbf{x}_1,\mathbf{x}_2,...,\mathbf{x}_n} \right\}$ consisting of $n$ samples, each pretrained model ${\phi _l}\left( {\mathbf{x}_i} \right)$ transforms a raw image ${\mathbf{x}_i}$ into a fixed-dimensional vector through transfer learning. For a given pretrained model ${\phi _l}$, a feature self-representation learning (FSRL) model is stacked on top of the fixed-dimensional vectors produced by the frozen pretrained model. Specifically, an FSRL model is denoted as $\delta ^{\left( l \right)} \left( {\mathbf{W}_l}; {{\phi _l}\left( {\mathbf{x}_i} \right)} \right)$, where ${\mathbf{W}_l}$ represents the trainable parameters of the FSRL model. Different pretrained models capture diverse aspects of visual data. Features of the same raw image produced by different pretrained models often exhibit fundamentally distinct feature distributions. The critical challenge lies in learning invariant representations across heterogeneous views from large-scale visual data, while leveraging these pretrained models in a fully unsupervised manner. For simplicity, we consider two pretrained models, ${\phi _i}$ and ${\phi _j}$. The parameters ${\mathbf{W}_i}$ and ${\mathbf{W}_j}$ of the two FSRL models ${\delta^{\left( i \right)}}\left( { \cdot \ ; \ \cdot } \right)$ and ${\delta ^{\left( j \right)}}\left( { \cdot \ ; \ \cdot } \right)$ are jointly learned by the following optimization problem:
\vskip -0.2in
\begin{equation}\label{eq:op}
\begin{split}
\mathop {\min }\limits_{{\mathbf{W}_i},{\mathbf{W}_j}} \frac{1}{n}\sum\limits_{r = 1}^n {\mathcal{L}\left( {\delta ^{\left( i \right)}\left( {{\mathbf{W}_i}}; {{\phi _i}\left( {{\mathbf{x}_r}} \right)} \right), \delta ^{\left( j \right)}\left({{\mathbf{W}_j}}; {{\phi _j}\left( {{\mathbf{x}_r}} \right)} \right)} \right)}
\end{split}
\end{equation}
where $\mathcal{L}$ represents a general loss function, and the training dataset $\mathcal{D}_t$ contains $n$ unlabeled samples, i.e., $\mathcal{D}_t = \left\{ {{\mathbf{x}_1},{\mathbf{x}_2},...{\mathbf{x}_n}} \right\}$.

\subsection{Network Architecture}
The network architecture of the MSRL model is illustrated in Fig.~\ref{fig:architecture}. It consists of three main modules: a transfer learning module, a feature self-representation learning module and an assignment probability distribution consistency module. The transfer learning module incorporates multiple pretrained models to extract features that capture high-level semantic information from visual data, which results in heterogeneous multiview features. Given these heterogeneous multiview features, the MSRL framework jointly leverages the self-representation property of these features to learn invariant representations in an unsupervised transfer learning manner.

\subsection{Information-Passing Mechanism for Feature Self-Representation Learning}
Given a sample $\mathbf{x} \in {\mathbb{R}^{m}}$, a low-dimensional feature $\mathbf{h} \in {\mathbb{R}^{d}}$ is obtained by applying a linear model $\sigma \left( { \cdot \ ; \ \cdot } \right)$ to the output of a pretrained model ${\phi \left( \mathbf{x} \right)}$, as follows:
\begin{equation}\label{eq:linear}
\begin{split}
\mathbf{h} = \sigma \left( {{\mathbf{W}^T}; \phi \left( \mathbf{x} \right)} \right)
\end{split}
\end{equation}
where $\mathbf{W} \in {\mathbb{R}^{{d} \times C}}$ represents the trainable parameters, i.e., the weight matrix of the linear model, $C$ denotes the number of clusters, and $\sigma \left( { \cdot \ ; \ \cdot } \right)$ denotes a linear transformation function. Let $\mathbf{H} = \left[ {{\mathbf{h}_1},{\mathbf{h}_2},...,{\mathbf{h}_n}} \right] \in {\mathbb{R}^{{d} \times n}}$ represent a set of low-dimensional features generated using Eq.~\eqref{eq:linear}. The basic assumption on local smoothness is that the features $\mathbf{H}$ produced by the linear model $\sigma \left( { \cdot \ ; \ \cdot } \right)$ can be partitioned into $C$ clusters by measuring local similarity across each other.
\begin{assumption}[Local Smoothness] \label{lm:assumption}
If two features $\mathbf{h}_i$ and $\mathbf{h}_j$ are close to each other, i.e., ${\mathbf{h}_j} \in {\mathbb{N}_k}\left( {\mathbf{h}_i} \right)$, in a smooth low-dimensional manifold, their soft cluster assignments are likely to be similar, where ${\mathbb{N}_k}\left(  {\mathbf{h}_i} \right)$ denotes the set of the $k$-nearest neighbors of ${\mathbf{h}_i}$.
\end{assumption}

The features ${\mathbf{h}_i}$ and ${\mathbf{h}_j}$ $\left( {1 \le i,j \le n} \right)$ within the same cluster are linearly dependent. By leveraging the self-representation property of these features, we adopt $\mathbf{H}$ as the new basis matrix. Thus, each feature ${\mathbf{h}_i}$ $\left( {1 \le i \le n} \right)$ can be represented by the linear combination of $\mathbf{H}$ as follows:
\begin{equation}\label{eq:lr}
\begin{split}
\mathbf{h}_i = \mathbf{H}\mathbf{a}_i + \mathbf{e}_i
\end{split}
\end{equation}
where ${\mathbf{a}_i} \in {\mathbb{R}^n}$ indicates a coefficient vector of linear combination, and $\mathbf{e}_i \in {\mathbb{R}^d}$ is an error term. From the perspective of graph theory, if two features $\mathbf{h}_i$ and $\mathbf{h}_j$ are proximate within the intrinsic geometry of the data distribution, their corresponding representations $\mathbf{a}_i $ and $\mathbf{a}_j $ are expected to be similarly proximate under an appropriately defined basis \cite{Belkin2003LE}. This relationship indicates that $\mathbf{a}_i$ and $\mathbf{a}_j$ are proximate if $\mathbf{h}_i$ and $\mathbf{h}_j$ belong to the same cluster.

Each coefficient $a_{ij}$ in $\mathbf{a}_i $ is closely associated with the neighbors of the feature $\mathbf{h}_i$. To obtain an appropriate coefficient vector $\mathbf{a}_i$, we introduce an information-passing mechanism in MSRL. Specifically, information aggregation is performed such that a representation corresponding to each feature is represented by its spatially neighboring features, and their information is leveraged for self-representation. Given a feature ${\mathbf{h}}$, the challenge lies in defining an operator that adaptively selects neighborhoods of varying sizes while maintaining the linear weights shared among the features in $\mathbf{H}$ within the information-passing mechanism. An attention mechanism is adopted to capture the relationships among the features in $\mathbf{H}$, emphasizing the most linearly related features within the same category to produce low-dimensional vector representations \cite{Veli2018GAT, Vaswani2017AT}. The coefficients $a_{ij}$ between the features ${\mathbf{h}_i}$ and ${\mathbf{h}_j}$ computed by the attention mechanism are formulated as follows:
\begin{equation}\label{eq:attention}
\begin{split}
{a_{ij}} = \frac{{\exp \left( {{\mathop{\rm ReLU}\nolimits} \left( {\mathbf{v}\left[ {\mathbf{h}_i}\parallel {\mathbf{h}_j} \right]} \right)} \right)}}{{\sum\limits_{r = 1}^N {\exp \left( {{\mathop{\rm ReLU}\nolimits} \left( {\mathbf{v}\left[ {\mathbf{h}_i}\parallel {\mathbf{h}_r} \right]} \right)} \right)} }}
\end{split}
\end{equation}
where $\mathbf{v}  \in {\mathbb{R}^{2d}} $ represents a weight vector learned by a single-layer feed-forward neural network, $N$ represents the batch size and $\left[ { \cdot \parallel  \cdot } \right]$ is the concatenation operation of two features.

The information-passing mechanism allows the representation ${\mathbf{z}_i} \in {\mathbb{R}^{d}}$, corresponding to the sample ${\mathbf{x}_i}$, to be formulated as a linear combination of features from the same category, i.e., by aggregating information from spatially proximate neighbors. The aggregation operation of ${\mathbf{h}_i}$, i.e., $f\left( {\mathbf{h}_i}; \mathbf{a}_i \right)$, is defined as follows:
\begin{equation}\label{eq:rep}
\begin{split}
{\mathbf{z}_i} & = f\left( {\mathbf{h}_i}; \mathbf{a}_i \right) \\
& = \sum\limits_{j = 1}^N {{a_{ij}}{\mathbf{h}_j} + } {\mathbf{h}_i}. \\
\end{split}
\end{equation}
Through the information-passing mechanism, MSRL aggregates information from neighboring features within the same category to central features, which allows it to effectively capture the geometric structure of these features. Given a pretrained model, the assignment probability distribution of the representation ${\mathbf{z}_i}$, i.e., $\mathbf{s}_i \in \Delta^{C-1}$, corresponding to sample $\mathbf{x}_i$ $\left( {1 \le i \le n} \right)$, is calculated as follows:
\begin{equation}\label{eq:prob}
\begin{split}
\mathbf{s}_i = \delta \left( {{\mathbf{z}_i}} \right)
\end{split}
\end{equation}
where $\delta \left( { \cdot } \right)$ is a softmax activation function and $\Delta^{C-1}$ denotes the ${\left(C-1\right)}$-dimensional probability simplex, i.e., $ \Delta^{C-1} = \left\{ {\mathbf{w} \in {\mathbb{R}^C}:{w_i} \ge 0,\sum\limits_{i = 1}^C {{w_i} = 1} } \right\}$.

\subsection{Assignment Probability Distribution Consistency among Multiple Views}
Different pretrained models typically yield features with heterogeneous feature distributions for the same sample. These features are referred to as multiview data. Since they often follow heterogeneous feature distributions, applying contrastive learning directly to them is not suitable for the MSRL model. To overcome this limitation, we present an assignment probability distribution consistency scheme for multiview self-representation learning. The assignment probability distributions produced by the linear models provide additional semantic knowledge for cluster assignments. In particular, these assignment probability distributions contain complementary information across different views.

The linear models corresponding to the pretrained models are expected to yield the same cluster assignment. Given $L$ pretrained models, a latent semantic assignment probability distribution for a sample $\mathbf{x}_i$ across the linear models is calculated as follows:
\begin{equation}\label{eq:ass}
\begin{split}
{\rho _\theta }\left( {{\mathbf{x}_i}} \right) = \frac{1}{L}\sum\limits_{l = 1}^L {\mathbf{s}_i^{\left( l \right)}}
\end{split}
\end{equation}
where ${\mathbf{s}}_i^{\left( l \right)} = \delta _{l} \left( {f_{l}\left( {\sigma _{l} \left( {{{\left( {{{\mathbf{W}}_k}} \right)}^T};\phi _{l} \left( {{{\mathbf{x}}_i}} \right)} \right);{\mathbf{a}}_i^{\left( l \right)}} \right)} \right)$. The cluster assignment of sample $\mathbf{x}_i$ can be predicted by
\begin{equation}\label{eq:assignment}
\begin{split}
{y_i} = \arg \mathop {\max }\limits_j {\left[ {{\rho _\theta}\left( {{\mathbf{x}_i}} \right)} \right]_j},  \quad j \in \left[ {1,2,...,C} \right]
\end{split}
\end{equation}
where ${\left[ {{\rho _\theta }\left( {{\mathbf{x}_i}} \right)} \right]_j}$ represents the assignment probability distribution of a sample $\mathbf{x}_i$ to the $j$th cluster. The cluster assignment provides specific semantic knowledge for generating pseudolabel. Under the guidance of these assignments, the semantic pseudolabel loss for unlabeled samples is formulated as follows:
\begin{equation}\label{eq:ce}
\begin{split}
\mathcal{L}_s & =  \frac{1}{B}\sum\limits_{i = 1}^B {\frac{1}{L}\sum\limits_{l = 1}^L {\mathcal{H}\left( {{y_i},{\mathbf{s}_i^{\left( l \right)}}} \right) }} \\
\end{split}
\end{equation}
where $\mathcal{H} \left( { \cdot , \cdot } \right)$ is the cross-entropy loss, i.e., ${\mathcal H}\left( {{y_i},{\mathbf{s}}_i^{\left( l \right)}} \right) = -\log {\mathbf{s}}_{i,{y_i}}^{\left( l \right)}$, and $B$ denotes the batch size of the unlabeled samples. The assignment probability distributions produced by all linear models $\left\{ {{\mathbf{s}}_i^{\left( l \right)}} \right\}_{l = 1}^L$ for sample $\mathbf{x}_i$ are directed toward the same probability direction, which indicates that these assignment probability distributions are jointly attracted toward the consensus ${{\rho _\theta }\left( {{\mathbf{x}_i}} \right)}$. Therefore, the consensus distribution is a stable attractor for all linear models.

To avoid all samples being assigned to a single cluster, we introduce the following regularization term:
\begin{equation}\label{eq:lossa}
\begin{split}
\mathcal{L}_a = \sum\limits_{l = 1}^L {\sum\limits_{j = 1}^C {{q_j^{\left( l \right)} \log {q_j^{\left( l \right)}}}} }
\end{split}
\end{equation}
where $q_j^{(l)}$ is defined as follows:
\begin{equation}\label{eq:defq}
\begin{split}
q_j^{\left( l \right)} = \frac{1}{B}\sum\limits_{i = 1}^B {s_{ij}^{\left( l \right)}}.
\end{split}
\end{equation}
This term is considered a cluster diversity loss in the MSRL model \cite{Xu2022MLFL}, which promotes balanced cluster assignments by maximizing the cross-entropy loss of the probability assignment for each FSRL component.

The consistency loss is an important component widely used in various self-supervised learning algorithms. The features from different views tend to share consistent assignment probability distributions. By minimizing the discrepancy among the assignment probability distributions produced by multiple linear models, the generalizability of the linear models is enhanced. The cross-view consistency loss over assignment probability distributions is defined as follows:
\begin{equation}\label{eq:loss1}
\begin{split}
{\mathcal{L}_c} = \frac{1}{B}\sum\limits_{i = 1}^B \sum\limits_{p = 1}^L {\sum\limits_{q = 1,p \ne q}^L {{\mathcal H}\left( {\mathbf{s}_i^{\left( p \right)},\mathbf{s}_i^{\left( q \right)}} \right)} }
\end{split}
\end{equation}
where $\mathbf{s}_i^{\left( p \right)} $ and $\mathbf{s}_i^{\left( q \right)}$ are the assignment probability distributions produced by the $p$th and $q$th linear models, respectively. The cross-entropy between $\mathbf{s}_i^{\left( p \right)} $ and $\mathbf{s}_i^{\left( q \right)}$ is given by
\begin{equation}\label{eq:loss11}
\begin{split}
{\mathcal H}\left( {\mathbf{s}_i^{\left( p \right)},\mathbf{s}_i^{\left( q \right)}} \right) =  - \sum\limits_{j = 1}^C {s_{ij}^{\left( p \right)}} \log s_{ij}^{\left( q \right)}.
\end{split}
\end{equation}
Eq.~\eqref{eq:loss11} drives the assignments of different views for each sample to converge in the probability simplex, which leads to refined alignment across multiple views.

\begin{algorithm}[tb]
\caption{Optimization Procedure for MSRL}
\label{alg:fsrl}
\textbf{Input}: An unlabeled dataset $\mathcal{D} = \left\{  {\mathbf{x}_1,\mathbf{x}_2,...,\mathbf{x}_n} \right\}$ with $C$ categories. \\
\textbf{Parameter}: The number of iterations $epochs$, parameters $\alpha > 0 $ and $\beta > 0$.\\
\textbf{Output}: The predicted labels of the unlabeled dataset $\mathcal{D}$ $\mathbf{Y} = \left[ {{y_1},{y_2},...,{y_n}} \right]$.
\begin{algorithmic}[1]
\FOR{ ${t = 1}$ to $epochs$ }
\STATE Obtaining a set of features $\mathbf{H}$ via Eq.~\eqref{eq:linear};
\STATE Obtaining the representations of the features $\mathbf{Z}$ via Eq.~\eqref{eq:rep} using the information-passing mechanism;
\STATE Determining the assignment probability distribution of each unlabeled sample ${s_i}$ produced by the $k$th component of the MSRL model via Eq.~\eqref{eq:prob};
\STATE Determining the cluster assignment of each unlabeled sample ${y_i}$ via Eq.~\eqref{eq:assignment};
\STATE Updating the whole network by minimizing ${\mathcal{L}}$ in Eq.~\eqref{eq:fullloss};
\ENDFOR
\STATE Determining ${y_i}$ via Eq.~\eqref{eq:assignment} for each unlabeled sample in $\mathcal{D}$. \\
\end{algorithmic}
\end{algorithm}

The MSRL model employs three complementary loss components, i.e., the semantic pseudolabel loss, the cluster diversity loss and the cross-view consistency loss, that jointly enforce assignment probability distribution consistency among the linear models. The overall loss objective of the MSRL model is given as follows:
\begin{equation}\label{eq:fullloss}
\begin{split}
{\mathcal{L}} = {\mathcal{L}_{s}} + \alpha {\mathcal{L}_a} + \beta {\mathcal{L}_c}
\end{split}
\end{equation}
where $\alpha$ and $\beta$ are tradeoff hyperparameters. The assignment probability distributions produced by different linear models associated with heterogeneous pretrained backbones converge to a unified distribution.  An adaptive momentum-based mini-batch gradient descent method \cite{Kingma2015ADAM} is used to optimize the entire network while keeping the pretrained models frozen. The entire optimization procedure of the proposed MSRL method is provided in Algorithm~\ref{alg:fsrl}.

\subsection{Theoretical Justification}
In this section, we provide theoretical discussions to clarify multiview self-representation learning and its connections with the information-passing mechanism, assignment probability distribution consistency and incremental views.

\subsubsection{Information-Passing Mechanism}
\begin{definition}[Weak Neighborhood Alignment] \label{def:wna}
The encoder ${f}$ is weakly neighborhood-aligned if for any neighboring representation pair ${\mathbf{z}_i}$ and ${\mathbf{z}_j}$, the following condition is satisfied, i.e.,
\begin{equation*}
\begin{split}
{\mathbb{E}_{j \in {\mathbb{N}_k}\left( i \right)}}\left\| {{\mathbf{z}_i} - {\mathbf{z}_j}} \right\|_2 \le \varepsilon
\end{split}
\end{equation*}
where ${\mathbf{x}_j} \in {\mathbb{N}_k}\left( {\mathbf{x}_i} \right)$ and $ \varepsilon > 0$ is a small constant.
\end{definition}

According to Definition~\ref{def:wna}, the difference between the representations corresponding to neighboring features approximately lies in the null space of the weight matrix $\mathbf{W}$ of the shared linear layer. If row normalization is applied to $\mathbf{W}$, the following inequality holds:
\begin{equation}
\begin{split}
&{\left\| {{\mathbf{W}}\left( {\phi \left( {{{\mathbf{x}}_i}} \right) - \phi \left( {{{\mathbf{x}}_j}} \right)} \right)} \right\|_2} \\
\le & \sqrt C {\left\| {\left( {\phi \left( {{{\mathbf{x}}_i}} \right) - \phi \left( {{{\mathbf{x}}_j}} \right)} \right)} \right\|_2}.
\end{split}
\end{equation}
Row normalization mitigates undesired amplification along specific directions, making attention computation more stable. Consequently, the weak neighborhood alignment condition is more reliably satisfied in self-representation learning.

\begin{theorem}[Local Neighborhood Alignment] \label{th:wna}
Let ${\mathbf{h}_i}$ and ${\mathbf{h}_j}$ be a neighboring feature pair, and let
${s_i} = \delta \left( {f\left( {{\mathbf{h}_i}; \mathbf{a}_i } \right)} \right)$ and ${s_j} = \delta \left( {f\left( {{\mathbf{h}_j}; \mathbf{a}_j } \right)} \right)$
be their corresponding assignment probability distributions, where $\delta \left( { \cdot } \right)$ is the softmax activation function and $f\left( { \cdot ; \cdot } \right)$ represents an aggregation operation. The following equality holds:
\begin{equation*}
\mathbf{s}_i = \mathbf{s}_j
\end{equation*}
if the following condition is satisfied:
\begin{equation*}
\begin{split}
& {f\left( {\mathbf{h}_i}; \mathbf{a}_i  \right) - f\left( {\mathbf{h}_j}; \mathbf{a}_j  \right)} = c\mathbf{1}
\end{split}
\end{equation*}
where $c \in \mathbb{R}$ is a constant.
\end{theorem}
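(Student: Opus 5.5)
The plan is to use the translation invariance of the softmax $\delta(\cdot)$. This property is independent of the aggregation operator $f$ in Eq.~\eqref{eq:rep}. Write $\mathbf{z}_i = f(\mathbf{h}_i;\mathbf{a}_i)$ and $\mathbf{z}_j = f(\mathbf{h}_j;\mathbf{a}_j)$, both in $\mathbb{R}^C$, since the softmax in Eq.~\eqref{eq:prob} produces $\mathbf{s}_i\in\Delta^{C-1}$. The hypothesis then reads $\mathbf{z}_i = \mathbf{z}_j + c\mathbf{1}$ for some scalar $c$. Nothing about the attention coefficients $a_{ij}$ or the neighborhood relation is needed beyond this identity; the neighboring assumption only motivates why such a condition is plausible, in the spirit of Definition~\ref{def:wna}.

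The key steps, in order, are as follows.

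First, I would write the $k$th coordinate of the softmax explicitly:
\begin{equation*}
[\mathbf{s}_i]_k = \frac{\exp(z_{ik})}{\sum_{r=1}^C \exp(z_{ir})}.
\end{equation*}

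Second, I would substitute $z_{ik} = z_{jk} + c$ for every $k$ and factor $\exp(c)$ out of both the numerator and every term of the denominator:
\begin{equation*}
[\mathbf{s}_i]_k = \frac{\exp(c)\exp(z_{jk})}{\exp(c)\sum_{r=1}^C \exp(z_{jr})} = [\mathbf{s}_j]_k.
\end{equation*}
The cancellation is legitimate because $\exp(c)>0$ and the denominator is strictly positive.

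Third, since this holds for every $k\in\{1,\dots,C\}$, I would conclude $\mathbf{s}_i=\mathbf{s}_j$ as vectors in $\Delta^{C-1}$.

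There is no real technical obstacle. The only point requiring care is dimensional consistency. The statement treats $f(\mathbf{h}_i;\mathbf{a}_i)$ as the logit vector fed to $\delta$, so $\mathbf{1}$ must be the all-ones vector of the same dimension as $\mathbf{z}_i$. I would note explicitly that $d=C$ in this setting, because the linear model $\mathbf{W}\in\mathbb{R}^{d\times C}$ maps into the cluster space. I would also remark that the constant shift is exactly the kernel of the softmax map, so the condition is in fact necessary as well as sufficient. Conversely, if $\mathbf{s}_i=\mathbf{s}_j$, then taking logarithms of the coordinates gives $z_{ik}-z_{jk} = \log\sum_r e^{z_{ir}} - \log\sum_r e^{z_{jr}}$, which is independent of $k$. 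This converse is optional, but it clarifies that the theorem characterizes when neighboring features receive identical assignments.
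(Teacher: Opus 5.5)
Your proof is correct and follows the paper's argument. Both reduce the claim to the translation invariance of the softmax, $\delta(\mathbf{z}+c\mathbf{1})=\delta(\mathbf{z})$; the paper simply invokes this, while you verify it coordinatewise by cancelling $\exp(c)$, and your converse correctly adds that the condition is also necessary. One small caution: the aside that $d=C$ is unnecessary and conflicts with the paper's later use of $d$ as the backbone feature dimension (where $d>C$ is called typical). All the argument needs is that the logit vector $f(\mathbf{h}_i;\mathbf{a}_i)$ and $\mathbf{1}$ lie in $\mathbb{R}^C$.
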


\begin{proof}
The softmax function is translation invariant, i.e., for any scalar $\mu \in \mathbb{R}$,
\begin{equation*}
\begin{split}
softmax \left( {\mu + c\mathbf{1}} \right) = softmax \left( \mu \right).
\end{split}
\end{equation*}
Thus,  it follows that
\begin{equation*}
\begin{split}
{\mathbf{s}_i} & =  \delta \left( {f\left( {{\mathbf{h}_i}; \mathbf{a}_i } \right)} \right) \\
 & = \delta \left( {{f}\left( {{\mathbf{h}_j}; \mathbf{a}_j} \right)} +  c\mathbf{1} \right) \\
& = \delta \left( {f\left( {{\mathbf{h}_j}; \mathbf{a}_j } \right)} \right) \\
& = {\mathbf{s}_j}. \\
\end{split}
\end{equation*}
\end{proof}

Theorem~\ref{th:wna} indicates that when the encoder $f$ is weakly neighborhood-aligned, neighboring features can share an identical assignment probability distribution. This result follows directly from the translation invariance of the softmax function. Furthermore, let $\mathbf{z}_i=f\left( {\sigma \left( {{\mathbf{W}^T}; \phi \left( \mathbf{x}_i \right)} \right); \mathbf{a}_i } \right)$ and $\mathbf{z}_j=f\left( {\sigma \left( {{\mathbf{W}^T}; \phi \left( \mathbf{x}_j \right)} \right); \mathbf{a}_j} \right)$ be two representations corresponding to a pair of neighboring samples $\mathbf{x}_i$ and $\mathbf{x}_j$, respectively. Here ${\mathbf{z}_i}$ and ${\mathbf{z}_j}$ are considered invariant representations for $\mathbf{x}_i$ and $\mathbf{x}_j$, respectively. These representations become strictly neighborhood-aligned, i.e., $\mathbf{z}_i = \mathbf{z}_j$ when the linear model satisfies $rank\left( \mathbf{W} \right) = \mathit{d}$ and $\mathit{d} \le \mathit{C}$. However, the opposite condition, i.e., $ \mathit{d} > \mathit{C}$, often holds in high-dimensional data. This finding indicates that $\mathbf{W}$ cannot enforce strict neighborhood alignment. In contrast, weak neighborhood alignment provides more realistic and appropriate conditions for the information-passing mechanism.

\subsubsection{Assignment Probability Distribution Consistency}
Let $\left\{{\mathbf{s}}_i^{\left( l \right)} \right\}_{l = 1}^L  \in \Delta_{\delta} ^{C-1} $ denote the assignment probability distributions produced by the $L$ linear models for sample $\mathbf{x}_i$, where $\Delta_{\delta} ^{C-1}$ is the probability simplex, i.e., $\Delta_{\delta} ^{C-1} = \left\{ { \mathbf{w} \in {\Delta^{C - 1}}:{w_i} \ge \delta  > 0} \right\}$. As ${\mathbf{s}}_i^{\left( l \right)}$ is generated by a softmax operator in Eq.~\eqref{eq:prob}, it lies in the interior of the probability simplex. In particular, the entropy function $\mathcal{H}\left(  \cdot  \right)$ is Lipschitz-continuous in this domain. For fixed $\mathbf{s}_i^{\left( p \right)}$, the cross-entropy ${\mathcal{H}\left( {\mathbf{s}_i^{\left( p \right)},\mathbf{s}_i^{\left( q \right)}} \right)}$ in Eq.~\eqref{eq:loss1} is strictly convex in $\mathbf{s}_i^{\left( q \right)}$ and uniquely minimized when $\mathbf{s}_i^{\left( p \right)} = \mathbf{s}_i^{\left( q \right)}$.

\begin{theorem}[Bounded Multiview Consistency]\label{th:bound}
Considering the sample $\mathbf{x}_i$, assume that there exists a sufficiently small constant $\varepsilon  > 0$ such that
\begin{equation*}
\mathbb{E}\left[ {{{\left\| {\mathbf{s}_i^{\left( p \right)} - \mathbf{s}_i^{\left( q \right)}} \right\|}_1}} \right] \le \varepsilon, \quad \forall p \ne q
\end{equation*}
where $\mathbf{s}_i^{\left( p \right)} \in \Delta_{\delta} ^{C-1} $ and $\mathbf{s}_i^{\left( q \right)} \in \Delta_{\delta} ^{C-1}$ denote the assignment probability distributions produced by the $p$th and $q$th linear models, respectively. Then, there exists a latent semantic assignment probability distribution $\mathbf{p}_i^{\left( {L} \right)} \in \Delta_{\delta}^{C-1}$ for sample $\mathbf{x}_i$ over $L$ linear models, defined as $\mathbf{p}_i^{\left( {L} \right)} = {\rho _\theta}\left( {{\mathbf{x}_i}} \right)$, which satisfies the following properties:
\begin{equation*}
\begin{split}
& (1) \quad \mathbb{E}\left[ {\mathcal{H}\left( \mathbf{p}_i^{\left( {L} \right)}  \right)} \right] \le \frac{1}{L}\sum\limits_{l = 1}^L {\mathbb{E}\left[ {\mathcal{H}\left( {\mathbf{s}_i^{\left( l \right)}} \right)} \right]} +  Q_{\delta} \varepsilon ; \\
& (2) \quad \mathbb{E}\left[ {{{\left\| {\mathbf{s}_i^{\left( l \right)} - \mathbf{p}_i^{\left( {L} \right)}} \right\|}_1}} \right] \le \varepsilon, \quad \forall l = 1, \dots, L
 \end{split}
\end{equation*}
where $L$ denote the number of pretrained models, and $Q_{\delta} > 0$ is the Lipschitz constant of the entropy function on the probability simplex $\Delta_{\delta} ^{C-1}$.
\end{theorem}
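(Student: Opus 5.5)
The plan is to take $\mathbf{p}_i^{(L)} = \rho_\theta(\mathbf{x}_i) = \frac{1}{L}\sum_{l=1}^L \mathbf{s}_i^{(l)}$ as in Eq.~\eqref{eq:ass} and check the two properties directly. Membership $\mathbf{p}_i^{(L)} \in \Delta_\delta^{C-1}$ is immediate. The set $\Delta_\delta^{C-1}$ is convex, since it is the intersection of the simplex with the half-spaces $w_j \ge \delta$. A convex combination of the $\mathbf{s}_i^{(l)}$ therefore stays in it. This is what makes the Lipschitz constant $Q_\delta$ of the entropy available at $\mathbf{p}_i^{(L)}$.

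\textbf{Property (2).} Fix $l$ and write
\begin{equation*}
\mathbf{s}_i^{(l)} - \mathbf{p}_i^{(L)} = \frac{1}{L}\sum_{q=1}^L \left( \mathbf{s}_i^{(l)} - \mathbf{s}_i^{(q)} \right).
\end{equation*}
The term $q=l$ vanishes. The triangle inequality for $\|\cdot\|_1$ and linearity of expectation then give
\begin{equation*}
\mathbb{E}\left\|\mathbf{s}_i^{(l)} - \mathbf{p}_i^{(L)}\right\|_1 \le \frac{1}{L}\sum_{q \ne l} \mathbb{E}\left\|\mathbf{s}_i^{(l)} - \mathbf{s}_i^{(q)}\right\|_1 \le \frac{L-1}{L}\,\varepsilon \le \varepsilon.
\end{equation*}

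\textbf{Property (1).} We use the Lipschitz continuity of $\mathcal{H}$ on $\Delta_\delta^{C-1}$ with respect to $\|\cdot\|_1$, which the paper asserts just before the theorem. It gives, for each $l$,
\begin{equation*}
\mathcal{H}\left(\mathbf{p}_i^{(L)}\right) \le \mathcal{H}\left(\mathbf{s}_i^{(l)}\right) + Q_\delta \left\|\mathbf{s}_i^{(l)} - \mathbf{p}_i^{(L)}\right\|_1.
\end{equation*}
Averaging over $l$, taking expectations, and inserting property (2) yields
\begin{equation*}
\mathbb{E}\left[\mathcal{H}\left(\mathbf{p}_i^{(L)}\right)\right] \le \frac{1}{L}\sum_{l=1}^L \mathbb{E}\left[\mathcal{H}\left(\mathbf{s}_i^{(l)}\right)\right] + Q_\delta\,\varepsilon,
\end{equation*}
which is the claimed bound.

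The point needing care is the Lipschitz constant. On $\Delta_\delta^{C-1}$ one has $|\partial \mathcal{H}/\partial w_j| = |\log w_j + 1| \le \max(1, |\log\delta - 1|)$, so the mean value theorem along the segment (which stays in the convex set) gives an explicit $Q_\delta$ in the $\ell_1$ norm. This is the only place the lower bound $\delta$ is used.

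Note also that concavity of the entropy gives the reverse inequality $\mathcal{H}(\mathbf{p}) \ge \frac{1}{L}\sum_l \mathcal{H}(\mathbf{s}^{(l)})$ for free. Property (1) is thus really a two-sided statement that the consensus entropy is within $Q_\delta\varepsilon$ of the average view entropy.
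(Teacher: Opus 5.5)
Your argument is correct and is essentially the paper's: property (2) comes from writing $\mathbf{s}_i^{(l)}-\mathbf{p}_i^{(L)}$ as an average of pairwise differences and applying the triangle inequality, and property (1) comes from the $\ell_1$-Lipschitz bound on $\mathcal{H}$ over $\Delta_\delta^{C-1}$ for each $l$, combined with (2) and averaged over $l$. The one difference is that the paper's proof opens with $\mathcal{H}\bigl(\frac{1}{L}\sum_l \mathbf{s}_i^{(l)}\bigr) \le \frac{1}{L}\sum_l \mathcal{H}(\mathbf{s}_i^{(l)})$, which, as you observe, runs the wrong way for the concave entropy; your version correctly does without it, and it also supplies the convexity check $\mathbf{p}_i^{(L)}\in\Delta_\delta^{C-1}$ and an explicit $Q_\delta$ that the paper leaves implicit.
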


\begin{proof}
For any deterministic distributions $\left\{ {{\mathbf{s}}_i^{\left( l \right)}} \right\}_{l = 1}^L$, we have
\begin{equation*}
\begin{split}
\mathcal{H}\left( {\frac{1}{L}\sum\limits_{l = 1}^L {\mathbf{s}_i^{\left( l \right)}} } \right) \le \frac{1}{L}\sum\limits_{l = 1}^L {\mathcal{H}\left( {\mathbf{s}_i^{\left( l \right)}} \right)}.
\end{split}
\end{equation*}
By taking expectations with respect to ${{\mathbf{s}}_i^{\left( l \right)}}$, we get
\begin{equation*}
\begin{split}
\mathbb{E}\left[ {\mathcal{H}\left(\mathbf{p}_i^{\left( {L} \right)} \right)} \right] & \le \mathbb{E}\left[ \frac{1}{L}\sum\limits_{l = 1}^L {\mathcal{H}\left( {\mathbf{s}_i^{\left( l \right)}} \right)} \right] \\
& = \frac{1}{L}\sum\limits_{l = 1}^L {\mathbb{E}\left[ {\mathcal{H}\left( {\mathbf{s}_i^{\left( l \right)}} \right)} \right]}. \\
\end{split}
\end{equation*}
According to the assumption, we obtain
\begin{equation*}
\begin{split}
\mathbb{E}\left[ {{{\left\| {\mathbf{s}_i^{\left( l \right)} - \mathbf{p}_i^{\left( {L} \right)}} \right\|}_1}} \right] & =  \mathbb{E}\left[ {{{\left\| {\mathbf{s}_i^{\left( l \right)} -  \frac{1}{L}\sum\limits_{r = 1}^L {\mathbf{s}_i^{\left( r \right)}} } \right\|}_1}} \right] \\
&\le \frac{1}{L}\sum\limits_{r=1}^L {\mathbb{E}\left[ {{{\left\| {\mathbf{s}_i^{\left( l \right)} - \mathbf{s}_i^{\left( r \right)}} \right\|}_1}} \right]} \\
& \le \varepsilon . \\
\end{split}
\end{equation*}
As $\mathcal{H} \left(  \cdot  \right)$ is Lipschitz-continuous on the probability simplex $ \Delta_{\delta}^{C-1}$, there exists a constant $Q_{\delta} > 0$ for any two variables $\mathbf{u} \in  \Delta_{\delta}^{C-1}$ and $\mathbf{v} \in \Delta_{\delta}^{C-1}$ such that
\begin{equation*}
\begin{split}
{\left| {\mathcal{H}\left( \mathbf{u} \right) - \mathcal{H}\left( \mathbf{v} \right)} \right|} \le Q_{\delta} \left( {{{\left\| {\mathbf{u} - \mathbf{v}} \right\|}_1}} \right).
\end{split}
\end{equation*}
Thus,
\begin{equation*}
\begin{split}
\left| {\mathbb{E}\left[ {\mathcal{H}\left( \mathbf{p}_i^{\left( {L} \right)} \right)} \right] - E\left[ {\mathcal{H}\left( {\mathbf{s}_i^{\left( l \right)}} \right)} \right]} \right| \le Q_{\delta} \varepsilon .
\end{split}
\end{equation*}
Finally,
\begin{equation*}
\begin{split}
\mathbb{E}\left[ {\mathcal{H}\left( \mathbf{p}_i^{\left( {L} \right)}  \right)} \right] - \frac{1}{L}\sum\limits_{l = 1}^L {\mathbb{E}\left[ {\mathcal{H}\left( {\mathbf{s}_i^{\left( l \right)}} \right)} \right]} \le Q_{\delta} \varepsilon .
\end{split}
\end{equation*}
\end{proof}

The semantic pseudolabel loss in Eq.~\eqref{eq:ce} is minimized when the assignment probability distribution is concentrated on the same category $y_i$ for all pretrained models, where $y_i$ is computed from ${\rho _\theta }\left( {{\mathbf{x}_i}} \right)$. Theorem~\ref{th:bound} establishes two fundamental properties of assignment probability distribution consistency. Specifically, information fusion of assignment probability distributions across multiple views is effectively achieved in the probability simplex by averaging assignment probability distributions. Moreover, under the assumption that the disagreement among view-specific assignment probability distributions is bounded, the entropy of the fused assignment probability distribution is strictly reduced, providing a theoretical guarantee of the fused assignment probability distribution. Therefore, the assignment probability distributions ${{\mathbf{s}}_i^{\left( l \right)}}$ of all linear models converge to the latent semantic assignment distribution $\mathbf{p}_i^{\left( {L} \right)}$ for sample $\mathbf{x}_i$.

\subsubsection{Incremental View Analysis}
We present a view-incremental theoretical analysis to characterize how clustering quality evolves as heterogeneous views corresponding to different pretrained models are progressively introduced into MSRL. Specifically, when a new pretrained model is incorporated into an existing collection of $L$ pretrained models, the latent semantic assignment probability distribution $\mathbf{p}_i ^{\left( {L + 1} \right)}$ for a sample $\mathbf{x}_i$ over the $\left( L+1 \right)$ linear models is updated in an incremental manner. Such incrementally incorporated heterogeneous views define an incremental consensus dynamical system. According to Eq.~\eqref{eq:ass}, the latent semantic assignment distribution $\mathbf{p}_i ^{\left( {L + 1} \right)}$ for sample $\mathbf{x}_i$ is updated incrementally as follows:
\begin{equation}\label{eq:inc}
\begin{split}
\mathbf{p}_i ^{\left( {L + 1} \right)} = \frac{L}{{L + 1}} \mathbf{p}_i ^{\left( L \right)} + \frac{1}{{L + 1}}\mathbf{s}_i^{\left( L+1 \right)}
\end{split}
\end{equation}
where $\mathbf{p}_i^{\left( {L} \right)}$ is defined as $\mathbf{p}_i^{\left( {L} \right)} = {\rho _\theta }\left( {{\mathbf{x}_i}} \right)$ with $L$ pretrained models and $\mathbf{s}_i^{\left( L+1 \right)}$ denotes the assignment probability distribution produced by the $\left( L+1 \right)$th linear model. This defines an incremental updating rule for the latent semantic assignment probability distribution.

\begin{assumption}[Bounded Multiview Discrepancy] \label{as:difference}
Given a sample $\mathbf{x}_i$, assume that there exists a sufficiently small constant $\varepsilon > 0$ such that, for each view $l$,
\begin{equation*}
\mathbb{E}\left[ {{{\left\| {\mathbf{s}_i^{\left( l \right)} - \mathbf{p}_i} \right\|}_1}} \right] \le \varepsilon , \quad \forall l = 1, \dots, L
\end{equation*}
where ${\mathbf{p}_i} \in \Delta_{\delta}^{C-1}$ denotes a consensus assignment probability distribution, $\mathbf{s}_i^{\left( l\right)} \in \Delta_{\delta} ^{C-1}$ represents the assignment probability distribution produced by the $l$th linear model, and $L$ denote the number of pretrained models.
\end{assumption}

\begin{theorem}[Robbins-Monro Stochastic Approximation \cite{Robbins1951RM}] \label{th:appro}
Let ${\left\{ {{u_t}} \right\}_{t \ge 1}}$ be a sequence of nonnegative random variables satisfying the recursive inequality
\begin{equation*}
\mathbb{E}\left[ {{u_{t + 1}}} \right] \le \left( {1 - {\alpha _t}} \right)\mathbb{E}\left[ {{u_t}} \right] + {\alpha _t}\varepsilon
\end{equation*}
where $\varepsilon \ge 0$ is a constant and $\left\{ {\alpha _t} \right\}_{t \ge 1}$ is a deterministic step-size sequence satisfying
\begin{equation*}
{\alpha _t} > 0, \quad \sum\limits_{t = 1}^{\infty}  {{\alpha _t}}  = \infty \quad  \mathrm{and} \quad \sum\limits_{t = 1}^{\infty}  {\alpha _t^2}  < {\infty}.
\end{equation*}
Then, the sequence $\{\mathbb{E}[u_t]\}$ is uniformly bounded and converges to a neighborhood of size $\varepsilon$. More precisely, there exists a constant $w > 0$, depending only on the initial value $u_1$, such that
\begin{equation*}
\mathbb{E}[u_t] \le w \prod_{k=1}^{t-1} (1 - \alpha_k) + \varepsilon \sum_{k=1}^{t-1} \alpha_k \prod_{j=k+1}^{t-1} (1 - \alpha_j).
\end{equation*}
In particular, when ${\alpha _t} = {1 \mathord{\left/
 {\vphantom {1 t}} \right.
 \kern-\nulldelimiterspace} t}$, it holds that
\begin{equation*}
\mathbb{E}\left[ {{u_t}} \right] \le \mathcal{O}\left( {\frac{1}{t}} \right) + \varepsilon.
\end{equation*}
\end{theorem}

\begin{theorem}[Attractivity of Consensus Distribution]\label{th:stability}
Under Assumption~\eqref{as:difference}, there exists a constant $w>0$ such that
\begin{equation*}
\mathbb{E}\left[ {\left\| {\mathbf{p}_i^{\left( {L} \right)} - {\mathbf{p}_i}} \right\|_1} \right] \le w \left( {\frac{1}{L} + \varepsilon } \right)
\end{equation*}
where $\mathbf{p}_i^{\left( {L } \right)}$ is a latent semantic assignment probability distribution for the sample $\mathbf{x}_i$ over $L$ linear models.
\end{theorem}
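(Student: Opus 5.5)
The plan is to treat the incremental update in Eq.~\eqref{eq:inc} as a stochastic-approximation recursion with step size $\alpha_L = 1/(L+1)$, and then invoke Theorem~\ref{th:appro}.

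\textbf{Step 1: the error recursion.} Define $u_L = \left\| \mathbf{p}_i^{(L)} - \mathbf{p}_i \right\|_1$, which is nonnegative. Subtracting $\mathbf{p}_i$ from both sides of Eq.~\eqref{eq:inc} gives
\begin{equation*}
\mathbf{p}_i^{(L+1)} - \mathbf{p}_i = \frac{L}{L+1}\left( \mathbf{p}_i^{(L)} - \mathbf{p}_i \right) + \frac{1}{L+1}\left( \mathbf{s}_i^{(L+1)} - \mathbf{p}_i \right).
\end{equation*}
Applying the triangle inequality, taking expectations, and using Assumption~\ref{as:difference} on the new view yields
\begin{equation*}
\mathbb{E}[u_{L+1}] \le \left(1 - \alpha_L\right)\mathbb{E}[u_L] + \alpha_L \varepsilon, \qquad \alpha_L = \frac{1}{L+1}.
\end{equation*}

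\textbf{Step 2: the step sizes.} The sequence $\alpha_L = 1/(L+1)$ is deterministic and positive. Its sum diverges and its sum of squares converges. Hence Theorem~\ref{th:appro} applies in its special case $\alpha_t \approx 1/t$, giving $\mathbb{E}[u_L] \le \mathcal{O}(1/L) + \varepsilon$.

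\textbf{Step 3: the explicit constant.} To obtain the stated form $w(1/L + \varepsilon)$, I would unroll the recursion directly rather than rely on the $\mathcal{O}$-notation. The products telescope: $\prod_{k=1}^{L-1} \frac{k}{k+1} = \frac{1}{L}$. The weights $\alpha_k \prod_{j>k}(1-\alpha_j)$ sum to at most $1$. Therefore
\begin{equation*}
\mathbb{E}[u_L] \le \frac{\mathbb{E}[u_1]}{L} + \varepsilon.
\end{equation*}
Since both $\mathbf{p}_i^{(1)}$ and $\mathbf{p}_i$ lie in the simplex, $u_1 \le 2$. Taking $w = \max\{2, 1\} = 2$ then gives the claim.

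A more direct alternative avoids the recursion altogether. Since $\mathbf{p}_i^{(L)} - \mathbf{p}_i = \frac{1}{L}\sum_{l=1}^L \left( \mathbf{s}_i^{(l)} - \mathbf{p}_i \right)$, the triangle inequality gives $\mathbb{E}[u_L] \le \varepsilon$ outright. I would mention this as a remark, since it shows the bound holds trivially.

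\textbf{Main obstacle.} The main difficulty is bookkeeping. The recursion should start at $L=1$ with $\mathbf{p}_i^{(1)} = \mathbf{s}_i^{(1)}$, so $\mathbb{E}[u_1] \le \varepsilon$, which is even better than the crude bound $u_1 \le 2$. The step-size index must also be matched with the form used in Theorem~\ref{th:appro}. Everything else is the triangle inequality.
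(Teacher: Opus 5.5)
Your proposal is correct and follows the paper's route. You form the residual recursion from Eq.~\eqref{eq:inc}, apply the triangle inequality and Assumption~\ref{as:difference} to the new view, and then use the Robbins--Monro-type bound of Theorem~\ref{th:appro}; your explicit unrolling ($\prod_{k=1}^{L-1}\frac{k}{k+1}=\frac{1}{L}$, hence $\mathbb{E}[u_L]\le \mathbb{E}[u_1]/L+\varepsilon$ and $w=2$) supplies the constant that the paper leaves implicit. Your closing remark is also right: averaging directly gives $\mathbb{E}[u_L]\le\varepsilon$, so the stated bound holds even with $w=1$, and the recursion is not actually needed.
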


\begin{proof}
A residual term is defined as
\begin{equation*}
\mathbf{e}_i^{\left( L \right)} = \mathbf{p}_i^{\left( L \right)} - {\mathbf{p}_i}.
\end{equation*}
By the incremental updating rule in Eq.~\eqref{eq:inc}, we obtain
\begin{equation*}
\mathbf{e}_i^{\left( {L + 1} \right)} = \frac{L}{{L + 1}}\mathbf{e}_i^{\left( L \right)} + \frac{1}{{L + 1}}\left( {\mathbf{s}_i^{\left( {L + 1} \right)} - {\mathbf{p}_i}} \right)
\end{equation*}
Taking the $l_1$-norm and expectation on both sides, and applying the triangle inequality, we have
\begin{equation*}
\begin{split}
\mathbb{E}\left[ {{{\left\| {\mathbf{e}_i^{\left( {L + 1} \right)}} \right\|}_1}} \right] \le & \frac{L}{{L + 1}}\mathbb{E}\left[ {{{\left\| {\mathbf{e}_i^{\left( L \right)}} \right\|}_1}} \right] \\
& + \frac{1}{{L + 1}}\mathbb{E}\left[ {{{\left\| {\mathbf{s}_i^{\left( {L + 1} \right)} - {\mathbf{p}_i}} \right\|}_1}} \right].
\end{split}
\end{equation*}
By Assumption~\eqref{as:difference}, we further have
\begin{equation*}
\begin{split}
\mathbb{E}\left[ {{{\left\| {\mathbf{e}_i^{\left( {L + 1} \right)}} \right\|}_1}} \right] & \le \frac{L}{{L + 1}}\mathbb{E}\left[ {{{\left\| {\mathbf{e}_i^{\left( L \right)}} \right\|}_1}} \right] + \frac{\varepsilon }{{L + 1}} \\
& = \mathbb{E}\left[ {{{\left\| {\mathbf{e}_i^{\left( L \right)}} \right\|}_1}} \right] - \frac{1}{{L + 1}} \left( \mathbb{E}\left[ {{{\left\| {\mathbf{e}_i^{\left( L \right)}} \right\|}_1}} \right] - \varepsilon \right)
\end{split}
\end{equation*}
This recursive inequality corresponds to a Robbins-Monro stochastic approximation, and we get
\begin{equation*}
\mathbb{E}\left[ {{{\left\| {\mathbf{e}_i^{\left( {L} \right)}} \right\|}_1}} \right] \le w \left( {\frac{1}{L} + \varepsilon } \right)
\end{equation*}
for some constant $w>0$, which is independent of $L$.
\end{proof}

Theorem~\ref{th:stability} indicates that the latent semantic assignment probability distribution $\mathbf{p}_i ^{\left( {L} \right)}$ is attracted to the consensus assignment probability distribution ${\mathbf{p}_i}$ as $L$ increases. In particular, the effect of stochastic perturbations remains bounded by $\mathcal{O}\left( \varepsilon \right)$. Consequently, $\mathbf{p}_i$ is regarded as a stable attractor in expectation for an incremental consensus dynamic system. This attractor property explains why the latent semantic assignment probability distribution $\mathbf{p}_i ^{\left( {L} \right)}$ stabilizes as $L$ increases under Assumption~\eqref{as:difference}. In addition, Theorem~\ref{th:stability} characterizes how uncertainty evolves along the attracting trajectory.

\begin{theorem}[Monotonic Entropy Reduction]\label{th:entropy}
Under Assumption~\eqref{as:difference}, the expected entropy of $\mathbf{p}_i ^{\left( {L + 1} \right)}$ for the sample $\mathbf{x}_i$ satisfies
\begin{equation*}
\mathbb{E}\left[ \mathcal{H}(\mathbf{p}_i^{(L+1)}) \right] - \mathbb{E}\left[ \mathcal{H}(\mathbf{p}_i^{(L)}) \right] \le \frac{2 Q_{\delta} \varepsilon}{L+1},
\end{equation*}
where $\mathbf{p}_i^{(L)}$ is a consensus assignment probability distribution for the sample $\mathbf{x}_i$ over the $L$ linear models, and $Q_{\delta} > 0$ is the Lipschitz constant of the entropy function on the probability simplex $\Delta_{\delta} ^{C-1}$.
\end{theorem}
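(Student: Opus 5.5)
The plan is to control the one-step change of entropy through the incremental update rule \eqref{eq:inc} together with the Lipschitz continuity of $\mathcal{H}$ on $\Delta_{\delta}^{C-1}$. First, both $\mathbf{p}_i^{(L)}$ and $\mathbf{s}_i^{(L+1)}$ lie in $\Delta_{\delta}^{C-1}$. This set is convex, so $\mathbf{p}_i^{(L+1)}$ also lies in it, and the Lipschitz bound with constant $Q_{\delta}$ applies to every pair we compare. From \eqref{eq:inc} we get the deterministic identity
\begin{equation*}
\mathbf{p}_i^{(L+1)} - \mathbf{p}_i^{(L)} = \frac{1}{L+1}\left( \mathbf{s}_i^{(L+1)} - \mathbf{p}_i^{(L)} \right),
\end{equation*}
and therefore
\begin{equation*}
\left| \mathcal{H}(\mathbf{p}_i^{(L+1)}) - \mathcal{H}(\mathbf{p}_i^{(L)}) \right| \le \frac{Q_{\delta}}{L+1}\left\| \mathbf{s}_i^{(L+1)} - \mathbf{p}_i^{(L)} \right\|_1 .
\end{equation*}
Taking expectations, the claim reduces to showing $\mathbb{E}\|\mathbf{s}_i^{(L+1)} - \mathbf{p}_i^{(L)}\|_1 \le 2\varepsilon$.

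For this bound I would insert the consensus distribution $\mathbf{p}_i$ from Assumption~\ref{as:difference} and apply the triangle inequality:
\begin{equation*}
\mathbb{E}\left\| \mathbf{s}_i^{(L+1)} - \mathbf{p}_i^{(L)} \right\|_1 \le \mathbb{E}\left\| \mathbf{s}_i^{(L+1)} - \mathbf{p}_i \right\|_1 + \mathbb{E}\left\| \mathbf{p}_i - \mathbf{p}_i^{(L)} \right\|_1 .
\end{equation*}
The first term is at most $\varepsilon$, provided Assumption~\ref{as:difference} is read as holding for the newly added view $L+1$ too, which is implicit in the incremental setting. For the second term, $\mathbf{p}_i^{(L)} - \mathbf{p}_i = \frac{1}{L}\sum_{l=1}^{L}(\mathbf{s}_i^{(l)} - \mathbf{p}_i)$. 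Convexity of the norm, or equivalently the triangle inequality, then gives $\mathbb{E}\|\mathbf{p}_i^{(L)} - \mathbf{p}_i\|_1 \le \frac{1}{L}\sum_{l}\mathbb{E}\|\mathbf{s}_i^{(l)} - \mathbf{p}_i\|_1 \le \varepsilon$. This is the same averaging step used for property (2) of Theorem~\ref{th:bound}. Adding the two terms gives $2\varepsilon$, and hence the bound $\frac{2Q_{\delta}\varepsilon}{L+1}$. In fact we obtain the stronger absolute-value version, which in particular gives the one-sided inequality stated.

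The main point to check is the second term. I will not use Theorem~\ref{th:stability}, because its bound $w(1/L + \varepsilon)$ carries an unwanted $1/L$ term and an unspecified constant. The direct averaging argument gives exactly $\varepsilon$. The other subtlety is making sure every argument of $\mathcal{H}$ stays in $\Delta_{\delta}^{C-1}$, where the Lipschitz constant $Q_{\delta}$ is valid. Convexity of $\Delta_{\delta}^{C-1}$ handles this for all the averages involved.
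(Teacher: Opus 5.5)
Your proof is correct. It differs from the paper's at the first step, and your route is the sounder one.

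The paper first invokes concavity of entropy to write $\mathcal{H}(\mathbf{p}_i^{(L+1)}) \le \frac{L}{L+1}\mathcal{H}(\mathbf{p}_i^{(L)}) + \frac{1}{L+1}\mathcal{H}(\mathbf{s}_i^{(L+1)})$. This reduces the entropy increment to $\frac{1}{L+1}\bigl(\mathcal{H}(\mathbf{s}_i^{(L+1)}) - \mathcal{H}(\mathbf{p}_i^{(L)})\bigr)$, and only then does the paper apply the Lipschitz bound to the pair $(\mathbf{s}_i^{(L+1)}, \mathbf{p}_i^{(L)})$. You instead apply the Lipschitz bound directly to $(\mathbf{p}_i^{(L+1)}, \mathbf{p}_i^{(L)})$, via the identity $\mathbf{p}_i^{(L+1)} - \mathbf{p}_i^{(L)} = \frac{1}{L+1}(\mathbf{s}_i^{(L+1)} - \mathbf{p}_i^{(L)})$. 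From there the two proofs coincide. Both use the triangle inequality through $\mathbf{p}_i$ and the averaging bound $\mathbb{E}\|\mathbf{p}_i^{(L)} - \mathbf{p}_i\|_1 \le \varepsilon$. Both also tacitly extend Assumption~\ref{as:difference} to view $L+1$, which you correctly make explicit.

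The paper's first step uses Jensen in the wrong direction. Entropy is concave, so $\mathcal{H}(\mathbf{p}_i^{(L+1)}) \ge \frac{L}{L+1}\mathcal{H}(\mathbf{p}_i^{(L)}) + \frac{1}{L+1}\mathcal{H}(\mathbf{s}_i^{(L+1)})$. For example, take $C=2$, $L=1$, $\mathbf{s}_i^{(1)} = (1-\delta,\delta)$ and $\mathbf{s}_i^{(2)} = (\delta,1-\delta)$. The paper's intermediate inequality would bound the increment by $0$, but the increment is $\log 2 - \mathcal{H}(\mathbf{s}_i^{(1)}) > 0$ for $\delta < 1/2$. This breaks the paper's derivation, not the theorem itself.

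Your argument needs only the Lipschitz property and the convexity of $\Delta_{\delta}^{C-1}$, which you check and the paper leaves implicit. It yields the two-sided estimate $\bigl|\mathbb{E}[\mathcal{H}(\mathbf{p}_i^{(L+1)})] - \mathbb{E}[\mathcal{H}(\mathbf{p}_i^{(L)})]\bigr| \le \frac{2Q_{\delta}\varepsilon}{L+1}$. That form also makes clear the result is a stability estimate, not a genuine monotone entropy reduction.
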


\begin{proof}
By applying the incremental updating rule in Eq. (16) and leveraging the concavity of entropy on the probability simplex, we obtain
\begin{equation*}
\mathcal{H}\left( {\mathbf{p}_i^{\left( {L + 1} \right)}} \right) \le \frac{L}{{L + 1}}\mathcal{H}\left( {\mathbf{p}_i^{\left( {L} \right)}} \right) + \frac{1}{{L + 1}}\mathcal{H}\left( {\mathbf{s}_i^{\left( {L + 1} \right)}} \right).
\end{equation*}
Furthermore, we get
\begin{equation*}
\begin{split}
& \mathcal{H}\left( {\mathbf{p}_i^{\left( {L + 1} \right)}} \right) - \mathcal{H}\left( {\mathbf{p}_i^{\left( {L} \right)}} \right) \\
& \le \frac{1}{{L + 1}} \left( \mathcal{H}\left( {\mathbf{s}_i^{\left( {L + 1} \right)}} \right) -  \mathcal{H}\left( {\mathbf{p}_i^{\left( {L} \right)}} \right)  \right). \\
\end{split}
\end{equation*}
By the triangle inequality and the assumption, we get
\begin{equation*}
\begin{split}
& \mathbb{E}\left[ {{{\left\| {\mathbf{s}_i^{\left( {L + 1} \right)} - \mathbf{p}_i^{\left( L \right)}} \right\|}_1}} \right] \\
& \le \mathbb{E}\left[ {{{\left\| {\mathbf{s}_i^{\left( {L + 1} \right)} - {\mathbf{p}_i}} \right\|}_1}} \right] + \mathbb{E}\left[ {{{\left\| {\mathbf{p}_i^{\left( L \right)} - {\mathbf{p}_i}} \right\|}_1}} \right] \\
& \le \mathbb{E}\left[ {{{\left\| {\mathbf{s}_i^{\left( {L + 1} \right)} - {\mathbf{p}_i}} \right\|}_1}} \right] + \mathbb{E}\left[ {\left\| {\frac{1}{L}\sum\limits_{l = 1}^L {\left( {\mathbf{s}_i^{\left( l \right)} - {\mathbf{p}_i}} \right)} } \right\|_1} \right] \\
& \le 2\varepsilon. \\
\end{split}
\end{equation*}
As $\mathcal{H} \left(  \cdot  \right)$ is Lipschitz-continuous on the probability simplex $ \Delta_{\delta}^{C-1}$, there exists a constant $Q_{\delta} > 0$ for any two variables $\mathbf{u} \in \Delta_{\delta}^{C-1}$ and $\mathbf{v}  \in \Delta_{\delta}^{C-1}$ such that
\begin{equation*}
\begin{split}
{\left| {\mathcal{H}\left( \mathbf{u} \right) - \mathcal{H}\left( \mathbf{v} \right)} \right|} \le Q_{\delta} \left( {{{\left\| {\mathbf{u} - \mathbf{v}} \right\|}_1}} \right).
\end{split}
\end{equation*}
Thus,
\begin{equation*}
\begin{split}
& \mathbb{E}\left[ {\mathcal{H}\left( {\mathbf{p}_i^{\left( {L + 1} \right)}} \right)} \right] - \mathbb{E}\left[ {\mathcal{H}\left( {\mathbf{p}_i^{\left( {L} \right)}} \right)} \right] \\
& \le  \frac{1}{{L + 1}}\left( {\mathbb{E}\left[ {\mathcal{H}\left( {\mathbf{s}_i^{\left( {L + 1} \right)}} \right)} \right] - \mathbb{E}\left[ {\mathcal{H}\left( {\mathbf{p}_i^{\left( L \right)}} \right)} \right]} \right) \\
& \le \frac{Q_{\delta}}{L+1} \mathbb{E}\left[ \|\mathbf{s}_i^{(L+1)} - \mathbf{p}_i^{(L)} \|_1 \right] \\
& \le \frac{2 Q_{\delta} \varepsilon}{L+1}. \\
\end{split}
\end{equation*}
\end{proof}

Theorem \ref{th:entropy} characterizes the entropy evolution of the consensus assignment probability distribution under incremental heterogeneous views. Specifically, when the assignment probability distributions produced by individual linear models are mutually consistent up to a bounded deviation, the change in entropy induced by incorporating an additional view is tightly controlled. This result also indicates that not every additional view is guaranteed to be beneficial. In addition, Theorem \ref{th:entropy} establishes that the expected entropy variation decays at a rate of $\mathcal{O}\left( {{\varepsilon  \mathord{\left/
 {\vphantom {\varepsilon  {\left( {L + 1} \right)}}} \right.
 \kern-\nulldelimiterspace} {\left( {L + 1} \right)}}} \right)$. This implies that as more heterogeneous views are aggregated, the consensus assignment probability distribution becomes increasingly stable in terms of uncertainty, and any potential increase in entropy vanishes asymptotically. From the perspective of the incremental consensus dynamical system, this result indicates that the incremental update on the consensus assignment probability distribution forms a stabilizing process in the probability simplex, which prevents entropy inflation caused by heterogeneous views.

%% Table I %%
\begin{table}[t]
\caption{Statistics of the graph datasets.}
\label{tb:datasets}
\setlength{\tabcolsep}{4pt}
\begin{center}
\begin{scriptsize}
\begin{sc}
\begin{tabular}{lccr}
\toprule
Name  & Train & Test & Classes  \\
\midrule
Pets & 3,680 & 3,669 & 37  \\
GTSRB  & 26,640 & 12,630 & 43 \\
DTD & 3,760 & 1,880 & 47  \\
Aircraft & 6,667 & 3,333 & 100  \\
Flowers & 2,040 & 6,149 & 102   \\
CIFAR-10 & 50,000 & 10,000 & 10  \\
CIFAR-100 & 50,000 & 10,000 & 100  \\
ImageNet-100 & 1,281,167 & 50, 000 & 1,000  \\
\bottomrule
\end{tabular}
\end{sc}
\end{scriptsize}
\end{center}
\end{table}

\section{Experiments}
\label{sec:exp}
In this section, we conduct extensive experiments to evaluate the performance of the proposed MSRL method. All the experiments are conducted on a Linux workstation equipped with a GeForce RTX 4090 GPU (24 GB caches), an Intel(R) Xeon(R) Platinum 8336C CPU, and 128.0 GB of RAM.

\subsection{Experimental Settings}
Following the experimental settings in the previous work \cite{Gadetsky2025UT}, two representative pretrained models, i.e., DINOv2 \cite{Oquab2023DINO} and CLIP ViT-L/14 \cite{Radford2021CLIP}, are employed as backbones to derive two corresponding augmented feature views. We evaluate the clustering performance of MSRL on eight publicly available vision datasets, namely, the Pets, GTSRB, DTD, Aircraft, Flowers, CIFAR-10, CIFAR-100 and ImageNet-100 datasets \cite{Paszke2019Pytorch}. The statistics of the datasets are presented in Table~\ref{tb:datasets}. Each dataset is divided into two subsets: the training set and the test set. Two types of unsupervised transfer learning tasks, namely, self-supervised learning and zero-shot transfer learning, are employed for performance evaluation. To evaluate the effectiveness of the proposed MSRL method on self-supervised learning, we compare the proposed MSRL method against state-of-the-art approaches, namely, simple contrastive multiview clustering (SCMVC) \cite{Luo2024SCMVC}, contrastive multiview clustering (CMVC) \cite{Zhang2025MBSS}, sparse multiview clustering (SparseMVC) \cite{Liu2025SparseMVC}, hierarchical semantic alignment and cooperative completion (HSACC) \cite{Ding2025HSACC}, multilevel contrastive multiview clustering \cite{Bian2025MCMC} and TURTLE \cite{Gadetsky2025UT}. In addition, CLIP zero-shot transfer \cite{Gadetsky2025UT} and LaFTer \cite{Mirza2023FT} are included as baselines for comparison in the zero-shot transfer learning task. The source code for MSRL is implemented using the PyTorch framework \cite{Paszke2019Pytorch}. The source code \footnote{https://github.com/chenjie20/MSRL} for MSRL is publicly available online. The source code for all the comparison algorithms is provided by their respective authors. Four standard metrics are employed to evaluate the clustering performance of all the competing methods, i.e., the clustering accuracy (ACC), normalized mutual information (NMI) and adjusted rand index (ARI). For these metrics, a higher value indicates better clustering performance.

The learning rate for the proposed MSRL model was empirically set to $5e^{-4}$ for the GTSRB, DTD and CIFAR-100 datasets, and to $1e^{-3}$ for the remaining datasets. The batch size of each dataset was chosen from the range of $\left\{ {10{,}00, 5{,}000 , 1{,}000, 500, 100} \right\}$ during training and testing, depending on the number of samples in the dataset. The dropout rate was set to 0.1. The overall loss of the proposed MSRL model in Eq.~\eqref{eq:fullloss} involves two hyperparameters, $\alpha$ and $\beta$. The hyperparameter $\alpha$ was selected from $\left\{ 1.0, 5.0, 10.0 \right\}$ via a linear search strategy, while $\beta$ was fixed at $1.0$. To ensure a fair comparison, we report the best performance of all the competing methods after hyperparameter tuning.

\begin{figure}[!htbp]
\centering
\includegraphics[width=0.8\linewidth]{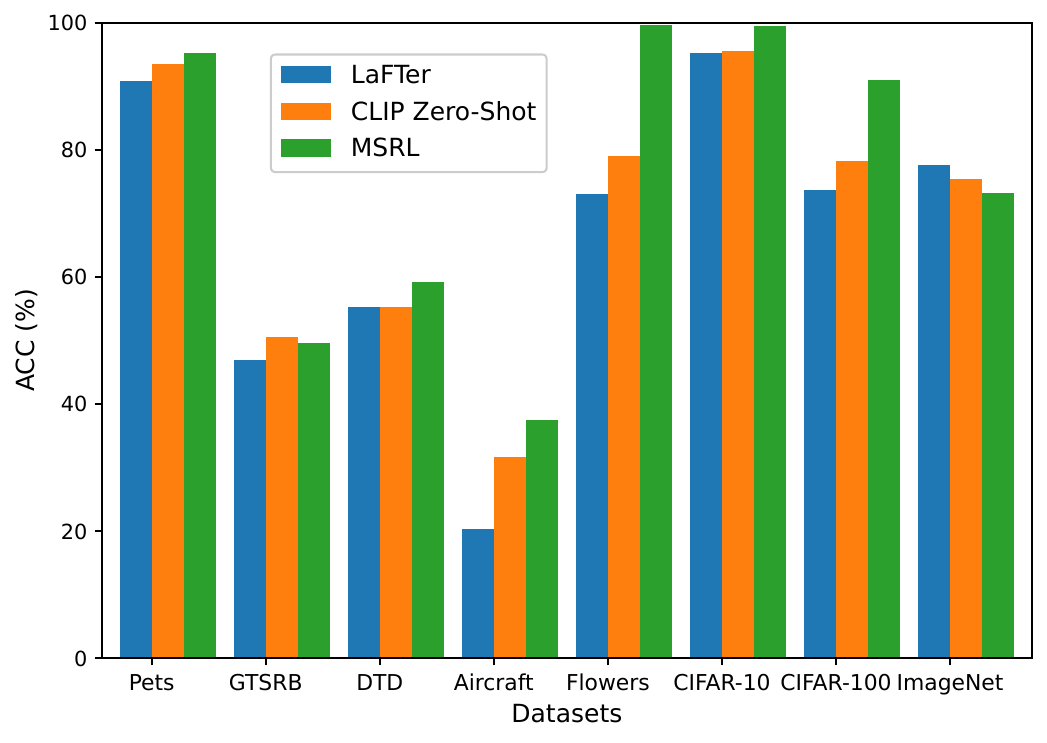}
\caption{Clustering accuracy comparison between MSRL and the zero-shot transfer learning-based methods.}
\label{fig:zeroshot} %% label for entire figure
\vskip -0.2in
\end{figure}

\begin{figure}[!htbp]
\centering
\includegraphics[width=0.8\linewidth]{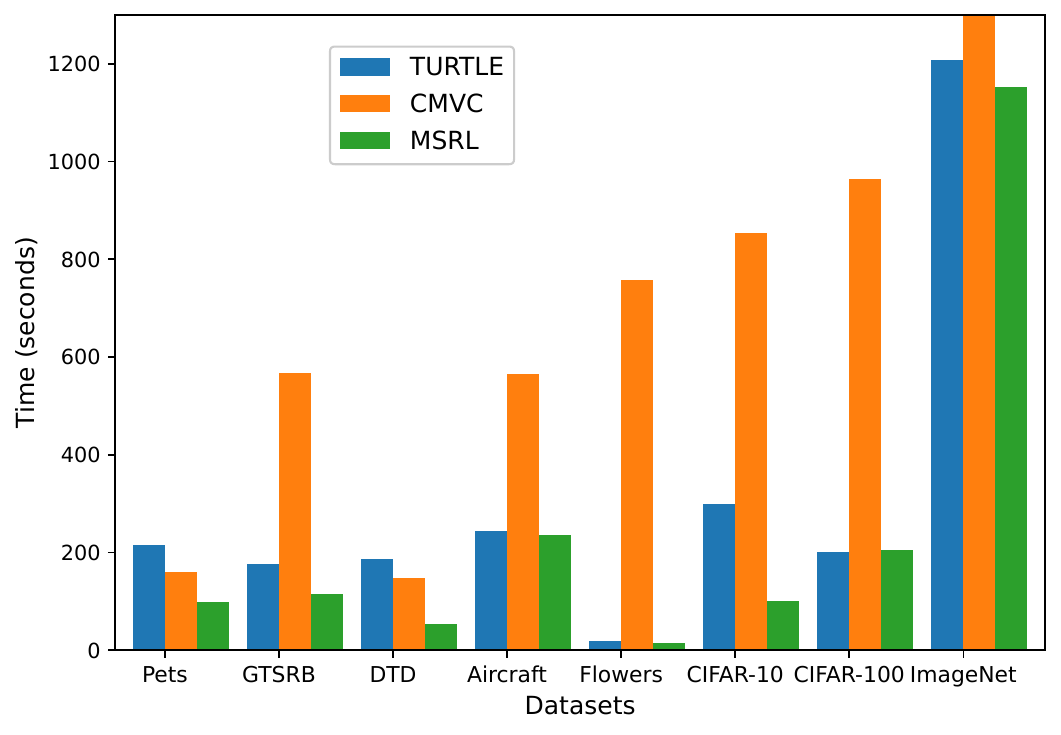}
\caption{Training time comparison (in seconds) among MSRL, TURTLE and CMVC for eight vision datasets.}
\label{fig:timecost} %% label for entire figure
\end{figure}

\subsection{Performance Evaluation}
In this section, we conduct an experimental evaluation on two types of unsupervised transfer learning tasks, namely, self-supervised learning and zero-shot transfer learning. These tasks are employed to evaluate the generalizability of the proposed MSRL model under different levels of unsupervised transfer learning.

%% Table II %%
\begin{table*}[!htbp]
\scriptsize
\setlength{\tabcolsep}{1pt}
\centering
\caption{Clustering performance of the different methods on eight vision datasets.}
\label{tb:ssl:results}
\begin{tabular}{c|ccc|ccc|ccc|ccc|ccc|ccc|ccc|ccc}
\hline
\multirow{2}*{Methods} & \multicolumn{3}{c|}{Pets} &  \multicolumn{3}{c|}{GTSRB} & \multicolumn{3}{c|}{DTD} & \multicolumn{3}{c|}{Aircraft} & \multicolumn{3}{c|}{Flowers} & \multicolumn{3}{c|}{CIFAR-10} & \multicolumn{3}{c|}{CIFAR-100}  & \multicolumn{3}{c}{ImageNet} \\
\cline{2-25}
~ & ACC & NMI & ARI & ACC & NMI & ARI & ACC & NMI & ARI & ACC & NMI & ARI & ACC & NMI & ARI & ACC & NMI & ARI & ACC & NMI & ARI & ACC & NMI & ARI \\
\hline
SCMVC & 84.04 & 90.24 & 79.47 & 21.57 & 34.09 & 11.94 & 55.59 & 62.66 & 38.89 & 16.77 & 44.69 & 14.19 & 97.17 & 99.17 & 98.02 & 80.31 & 75.42 & 64.75 & 75.96 & 81.52 & 62.59 & - & - & - \\
CMVC & 90.30 & 92.29 & 84.20 & 31.12 & 57.45 & 25.17 & 54.36 & 66.27 & 39.06 & 32.31 & 65.23 & 24.63 & 77.75 & 94.28 & 81.91 & 99.14 & 97.68 & 98.11 & 57.43 & 78.66 & 49.08 & - & - & - \\
SparseMVC & 87.95 & 94.81 & 86.75 &  35.65 & 59.15 & 32.98 & 52.55 & 66.91 & 38.98 & 24.87 & 62.74 & 23.79 & 89.15 & 96.88 & 89.42 & 97.86 & 94.23 & 95.31 & 42.14 & 73.74 & 37.00 & 14.44 & 63.73 & 7.46 \\
HSACC & 84.86 & 92.24 & 82.37 & 33.39 & 56.47 & 27.86 & 51.81 & 63.11 & 37.93 & 25.05 & 53.91 & 16.60 & 96.30 & 98.71 & 96.25 & 92.55 & 93.99 & 88.77 & 64.36 & 81.44 & 56.61 & - & - & - \\
MCMC & 51.43 & 87.18 & 62.15 & 35.48 & 62.72 & 33.73 & 29.15 & 58.17 & 25.73 & 14.16 & \underline{61.46} & 17.07 & 36.02 & 77.22 & 35.70 & 99.25 & 97.92 & 98.35 & 13.96 & 67.14 & 19.63 & 7.50 & 49.81 & 4.07 \\
TURTLE &\underline{92.59} & \underline{93.80} & \underline{87.62}  & \underline{45.01} & \underline{65.02} & \underline{36.89} & \underline{57.55} & \underline{68.42} & \underline{42.69} & \underline{36.78}  & 60.80 & \underline{25.76}  & \underline{99.56} & \underline{99.68} & \underline{99.48} & \underline{99.50} & \underline{98.56} & \underline{98.90} & \underline{89.10} & \underline{91.35} & \underline{82.46} & \underline{70.10} & \underline{86.58} & \underline{58.42} \\
MSRL & \textbf{95.83} & \textbf{95.83} & \textbf{92.14} & \textbf{49.78} & \textbf{65.89} & \textbf{46.65} & \textbf{62.34} & \textbf{72.89} & \textbf{49.03} & \textbf{37.65} & \textbf{66.96} & \textbf{27.72} & \textbf{99.71} & \textbf{99.76} & \textbf{99.58} & \textbf{99.57} & \textbf{98.74} & \textbf{99.05} & \textbf{91.00} & \textbf{92.34} & \textbf{84.72} & \textbf{73.24} & \textbf{88.26} & \textbf{62.80} \\
\hline
\end{tabular}
\end{table*}

%% Table IV %%
\begin{table*}[!htbp]
\scriptsize
\setlength{\tabcolsep}{0.5pt}
\centering
\caption{Clustering performance of MSRL using different combinations of pretrained models across eight vision datasets.}
\label{tb:views}
\begin{tabular}{c|ccc|ccc|ccc|ccc|ccc|ccc|ccc|ccc|ccc}
\hline
\multirow{2}*{Model} &  \multicolumn{3}{c|}{Models} & \multicolumn{3}{c|}{Pets} &  \multicolumn{3}{c|}{GTSRB} & \multicolumn{3}{c|}{DTD} & \multicolumn{3}{c|}{Aircraft} & \multicolumn{3}{c|}{Flowers} & \multicolumn{3}{c|}{CIFAR-10} & \multicolumn{3}{c|}{CIFAR-100}  & \multicolumn{3}{c}{ImageNet} \\
\cline{2-28}
~ & L/14 & B/16 & B/32 & ACC & NMI & ARI & ACC & NMI & ARI & ACC & NMI & ARI & ACC & NMI & ARI & ACC & NMI & ARI & ACC & NMI & ARI & ACC & NMI & ARI & ACC & NMI & ARI \\
\hline
\multirow{6}*{DINOv2} & \checkmark &  & & \underline{95.83} & \underline{95.83} & \underline{92.14} & \textbf{49.78} & 65.89 & \textbf{46.65} & 62.34 & 72.89 & 49.03 & 37.65 & 66.96 & 27.72 & 99.71 & 99.76 & 99.58 & \textbf{99.57} & \textbf{98.74} & \textbf{99.05} & \textbf{91.00} & \textbf{92.34} & \textbf{84.72} & \textbf{73.24} & \textbf{88.26} & \textbf{62.80} \\
~ &  & \checkmark &  & 93.35 & 94.41 & 89.48 & 41.65 & 58.80 & 36.01 & 62.66 & 72.44 & 49.08  & 33.54 & 64.42 & 24.53 & 99.74 & 99.79 & 99.64 & 99.41 & 98.27 & 98.69  & 88.43 & 90.78 & 80.98  & 71.46 & 87.30 & 60.25 \\
~ &  & & \checkmark & 93.30 & 94.19 & 88.36 & 41.16 & 58.92 & 33.93 & 60.74 & 70.79 & 46.60 & 35.61 & 65.92 & 26.75 &  \underline{99.76} & 99.80 & 99.66 & 99.37 & 98.18 & 98.61  & 86.79 & 89.95 & 79.05 & 71.05 & 87.15 & 59.78 \\
~ & \checkmark & \checkmark &  & \textbf{96.46} & \textbf{96.47} & \textbf{93.60} & 47.02 & \underline{66.52} & \underline{42.01} &  \underline{64.95} &  \underline{75.21} &  \underline{52.57} & \underline{40.35} & \textbf{70.03} & \underline{31.57}  & \textbf{99.79} & \textbf{99.82} & \textbf{99.72} & \underline{99.47} & \underline{98.45}  & \underline{98.83} & \underline{89.60} & \underline{91.44} & \underline{82.88} & \underline{72.97} & \underline{88.11} & \underline{62.30} \\
~ & \checkmark &  & \checkmark & 94.96 & 95.53 & 91.12 & 45.96 & 64.57 & 41.31 &  64.36 & 74.64 & 52.10  & \textbf{40.47} & \underline{69.21} & \textbf{31.72} &  \underline{99.76} & 99.80 & 99.68  & 99.46 & 98.40 & 98.81 & 89.23 & 91.35 & 82.47  & 72.40 & 87.69 & 61.31 \\
~ & \checkmark & \checkmark & \checkmark & 95.50 & 95.45 & 91.54 & \underline{47.92} & \textbf{69.02} & 41.41 & \textbf{66.22} & \textbf{75.68} & \textbf{53.97}  & 39.24 & 68.26 & 30.58  &   \underline{99.76} &  \underline{99.81} &  \underline{99.69}  & 99.40 & 98.25 & 98.67  & 88.71 & 90.86 & 81.24  & 70.46 & 87.12 & 59.73 \\
\hline
\end{tabular}
\vskip -0.2in
\end{table*}

\subsubsection{Self-Supervised Learning}
The experimental results of all the competing methods on the self-supervised learning task are listed in Table~\ref{tb:ssl:results}. The symbol '-' in Table~\ref{tb:ssl:results} indicates that the competing method encounters from out of memory errors when clustering is performed on a given vision dataset. The best and second-best clustering results are highlighted in bold and underlined, respectively. MSRL consistently outperforms the competing methods across all eight vision datasets. For example, it achieves ACC values of 91.00\% and 73.24\% on the CIFAR-100 and ImageNet datasets, respectively. It significantly outperforms the other competing methods by at least 1.9\% and 3.14\% in terms of the ACC. Moreover, MSRL also exhibits consistent advantages across the other three evaluation metrics. In addition, TURTLE achieves competitive clustering results over the multiview clustering approaches across all vision datasets. In contrast, the multiview clustering approaches generally yield less competitive clustering results, especially for large-scale vision datasets such as CIFAR-100 and ImageNet. These clustering results demonstrate the effectiveness of the proposed MSRL approach for self-supervised learning tasks.

The advantages of the proposed MSRL approach can be attributed to two key factors. First, high-level semantic knowledge is effectively transferred by leveraging multiple pretrained models, which also reduces excessive memory consumption. Second, complementary information across multiple views is explicitly exploited by enforcing consistency among assignment probability distributions, thereby mitigating the adverse effects of distribution heterogeneity across views. In contrast, traditional multiview clustering approaches typically assume homogeneous feature spaces or depend on direct feature fusion strategies, which limits their effectiveness when handling heterogeneous representations.

\subsubsection{Zero-Shot Transfer Learning}
We compare MSRL with zero-shot transfer learning methods that utilize descriptions of ground truth classes as a form of supervision, including CLIP zero-shot transfer \cite{Gadetsky2025UT} and LaFTer \cite{Mirza2023FT}. The clustering accuracy comparison between MSRL and the zero-shot transfer learning-based methods  is shown Fig.~\ref{fig:zeroshot}. MSRL outperforms CLIP zero-shot transfer and LaFTer on all eight vision datasets except GTSRB and ImageNet. The descriptions of the ground-truth classes provide effective external supervision for zero-shot transfer learning, which narrows the performance gap for the GTSRB and ImageNet datasets. This finding can be attributed to the strong semantic alignment between class-level textual descriptions and the pretraining distribution of CLIP-based models on these datasets. In contrast, MSRL does not depend on class-level semantic descriptions. Instead, it learns invariant representations from large-scale unlabeled visual data with various pretrained models in a fully unsupervised transfer manner. These invariant representations enable more flexible adaptation to diverse visual domains. Consequently, MSRL demonstrates strong robustness even in scenarios where explicit semantic supervision is available.

\subsection{Training Time Comparison on Self-Supervised Learning}
To validate the optimization efficiency of the proposed MSRL approach, we compare the training times among MSRL, TURTLE and SparseMVC on all eight vision datasets. These training times are presented in Fig.~\ref{fig:timecost}. It can be observed that both MSRL and TURTLE significantly outperform SparseMVC in terms of training time, where SparseMVC is considered a representative method for multiview clustering. This efficiency gain can be attributed to the use of unsupervised transfer learning, which helps reduce the computational cost associated with handling raw image data. By leveraging pretrained models, both MSRL and TURTLE benefit from reduced complexity when handling high-dimensional visual features, leading to faster convergence during training. Moreover, MSRL consistently runs faster than TURTLE on most of the datasets, which can be attributed to its ability to achieve competitive performance with fewer iterations.

%% Figure IV %%
\begin{figure*}[htbp]
        \begin{minipage}[t]{0.33\linewidth}
        \centering
        \includegraphics[width=4.5cm]{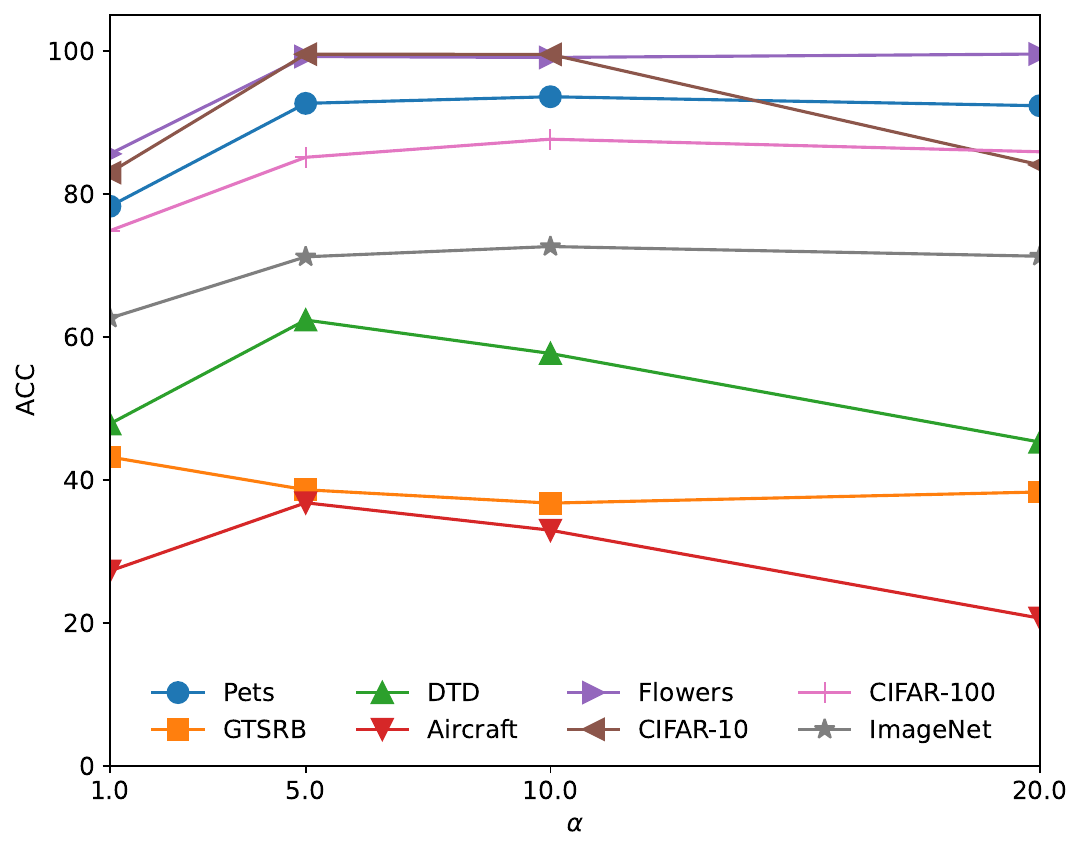}
        \caption*{(a) ACC}
        \end{minipage}%
        \begin{minipage}[t]{0.33\linewidth}
        \centering
        \includegraphics[width=4.5cm]{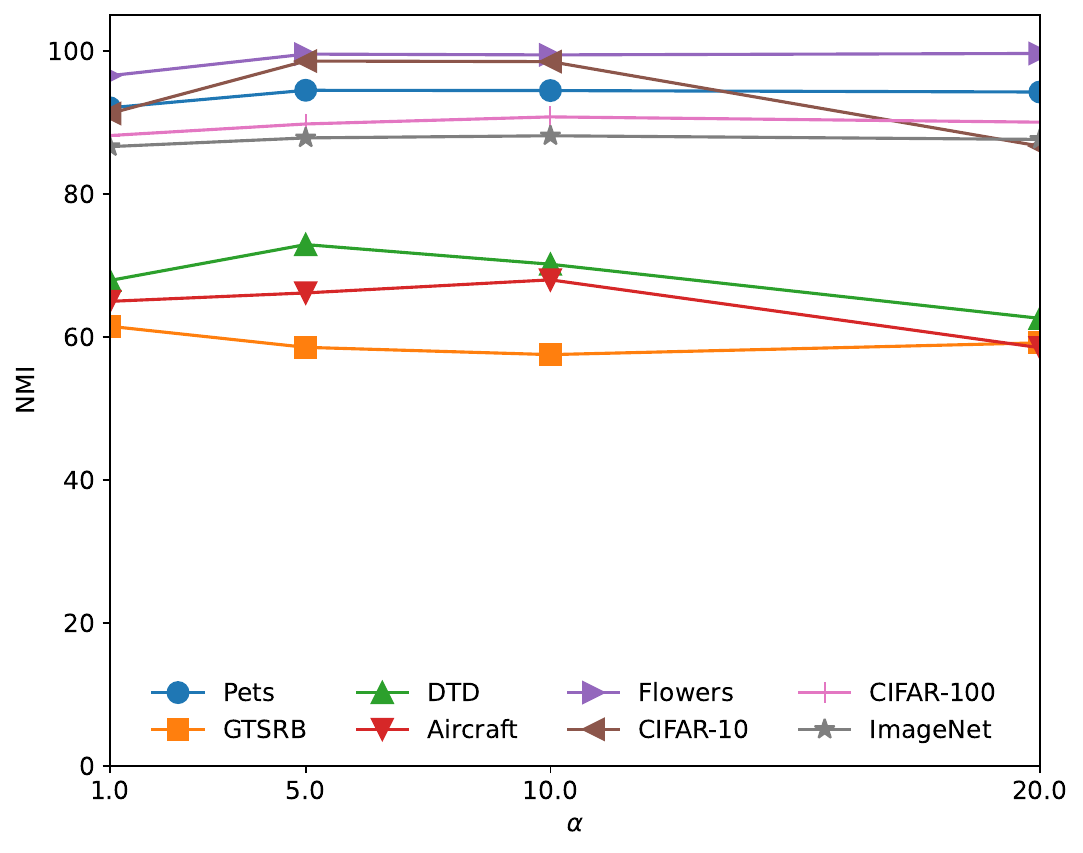}
        \caption*{(b) NMI}
        \end{minipage}%
         \begin{minipage}[t]{0.33\linewidth}
        \centering
        \includegraphics[width=4.5cm]{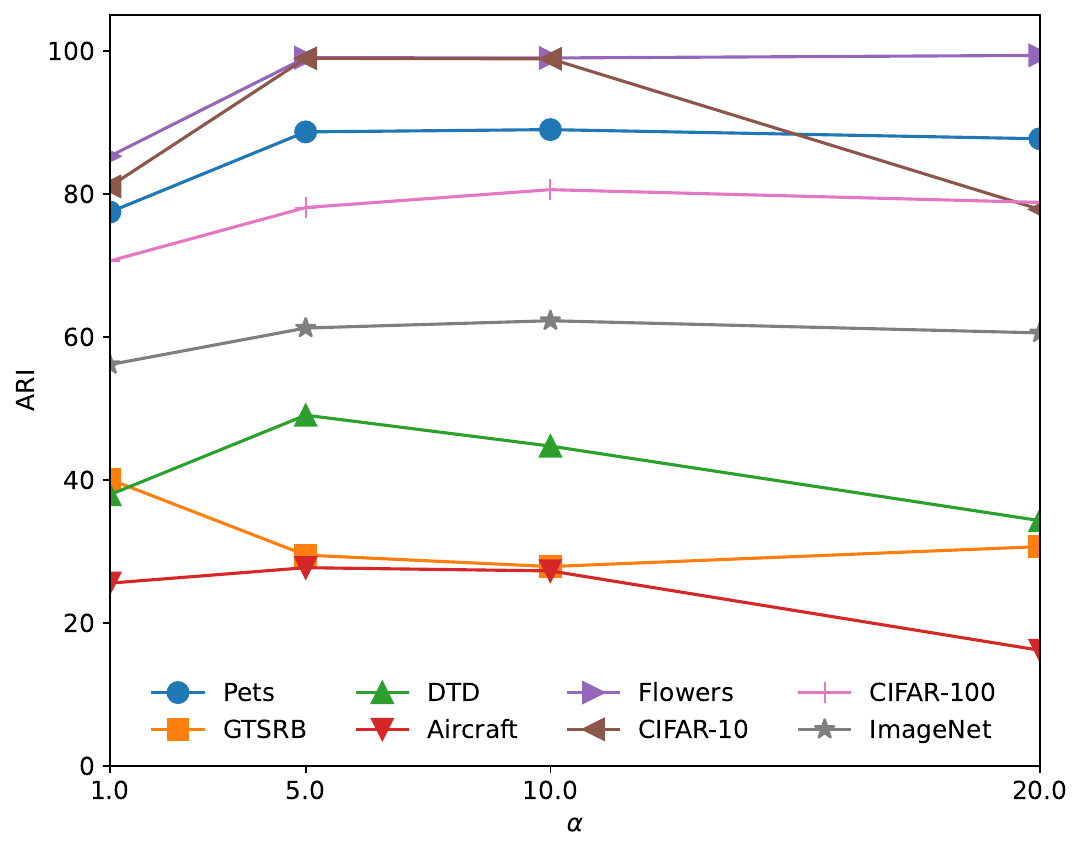}
        \caption*{(c) ARI}
        \end{minipage}%
        \centering
        \caption{Clustering results on eight vision datasets with different values of $\alpha$ in Eq.~\eqref{eq:fullloss}.}
         \label{fig:param}
\end{figure*}

%% Table III %%
\begin{table*}[!htbp]
\scriptsize
\setlength{\tabcolsep}{0.5pt}
\centering
\caption{Ablation study of the main components in Eq.~\eqref{eq:fullloss} with eight vision datasets.}
\label{tb:ablation}
\begin{tabular}{c|ccc|ccc|ccc|ccc|ccc|ccc|ccc|ccc|ccc}
\hline
\multirow{2}*{Methods} & \multirow{2}*{${\mathcal{L}_s}$} &  \multirow{2}*{${\mathcal{L}_a}$} &  \multirow{2}*{${\mathcal{L}_c}$} & \multicolumn{3}{c|}{Pets} &  \multicolumn{3}{c|}{GTSRB} & \multicolumn{3}{c|}{DTD} & \multicolumn{3}{c|}{Aircraft} & \multicolumn{3}{c|}{Flowers} & \multicolumn{3}{c|}{CIFAR-10} & \multicolumn{3}{c|}{CIFAR-100}  & \multicolumn{3}{c}{ImageNet} \\
\cline{5-28}
~ & ~ & ~ & ~ & ACC & NMI & ARI & ACC & NMI & ARI & ACC & NMI & ARI & ACC & NMI & ARI & ACC & NMI & ARI & ACC & NMI & ARI & ACC & NMI & ARI & ACC & NMI & ARI \\
\hline
MSRL$_{\textbf{avg}}$  & \checkmark &  \checkmark & & 90.05 & 92.73 & 85.27 & 40.00 & 56.25 & 29.79 & 54.47 & 68.31 & 42.30 & 29.25 & 57.66 & 20.89 & 99.20 & 99.64 & 99.42 & 99.37 & 98.17 & 98.61 & 84.82 & 88.82 & 77.22 & 72.31 & 88.27 & 62.47 \\
MSRL & \checkmark & \checkmark & \checkmark & \textbf{95.26} & \textbf{95.15} & \textbf{91.10} & \textbf{49.60} & \textbf{65.73} & \textbf{46.51} & \textbf{59.20} & \textbf{69.23} & \textbf{44.95} & \textbf{37.44} & \textbf{65.46} & \textbf{28.18} & \textbf{99.71} & \textbf{99.76} & \textbf{99.58} & \textbf{99.57} & \textbf{98.74} & \textbf{99.05} & \textbf{90.95} & \textbf{92.30} & \textbf{84.61} & \textbf{73.21} & \textbf{88.25} & \textbf{62.77} \\
\hline
\end{tabular}
\end{table*}

\subsection{Parameter Sensitivity Analysis}
The semantic pseudolabel loss $\mathcal{L}_s$ and the cross-view consistency loss ${\mathcal{L}_c} $ in Eq.~\eqref{eq:fullloss} are closely related to the assignment probability distributions of the samples. The parameter $\beta$ was empirically fixed at $1.0$, while the hyperparameter $\alpha$ was selected from $\left\{ 1.0, 5.0, 10.0, 20.0 \right\}$. The remaining hyperparameters in the proposed MSRL method were determined according to the experimental settings. The clustering performance, including the ACC, NMI, and ARI, on the eight vision datasets under different values of $\alpha$ is shown in Fig.~\ref{fig:param}. The experimental results indicate that the proposed MSRL method achieves stable and strong performance when $\alpha$ is chosen from $\left\{ 1.0, 5.0, 10.0 \right\}$.

\subsection{Ablation Study}
The semantic pseudolabel loss $\mathcal{L}_s$ and the cluster diversity loss $\mathcal{L}_a $ are two essential components of the overall loss in Eq.~\eqref{eq:fullloss}. In addition, the cross-view consistency loss ${\mathcal{L}_c} $ is incorporated to enforce consistency among assignment probability distributions across heterogeneous views, which facilitates the learning of invariant representations. To investigate the impact of the cross-view consistency loss $\mathcal{L}_c$ in Eq.~\eqref{eq:fullloss}, we conduct an ablation study for the eight vision datasets. Specifically, we construct a variant of the proposed model that omits $\mathcal{L}_c$ and employs only $\mathcal{L}_s$ and $\mathcal{L}_a$ in the overall loss, denoted as MSRL${\textbf{avg}}$. Table~\ref{tb:ablation} shows the clustering performance comparison between MSRL and MSRL$_{\textbf{avg}}$. The ablation results show that MSRL consistently outperforms MSRL$_{\textbf{avg}}$ across all the evaluated datasets. This finding demonstrates the effectiveness of the cross-view consistency loss. These results indicate that $\mathcal{L}_c$ enhances assignment probability distribution consistency across multiple views, which in turn leads to significant performance improvements for the proposed MSRL method.

\subsection{Evaluation of the Increasing Number of Pretrained Models}
The clustering performance of MSRL is influenced by the number of incorporated pretrained models, since different pretrained models correspond to heterogeneous views. To investigate the effect of varying the number of views, we conduct an additional ablation study with progressively constructed model combinations. Specifically, DINOv2 ViT-g/14 is fixed as a core pretrained model, and alternative pretrained models are incrementally selected from ViT-B/32, ViT-B/16 and ViT-L/14, which results in five different combinations with increasing numbers of views. This experimental protocol enables a controlled evaluation of how clustering performance evolves as more views are incorporated. All the hyperparameters of MSRL are kept identical to those specified in the experimental settings.

Table~\ref{tb:views} presents the clustering performance of MSRL under different combinations of pretrained models across eight vision datasets, which provides empirical validation for the incremental view analysis. When DINOv2 ViT-g/14 is combined with a single alternative pretrained model, ViT-L/14 almost emerges as the optimal choice. Moreover, the combination of DINOv2 ViT-g/14 and ViT-B/16 almost outperforms that of DINOv2 ViT-g/14 and ViT-B/32. This finding indicates that the quality of the additional view plays a critical role. When the number of pretrained models increases to three, the combination including ViT-L/14 and ViT-B/16 yields better clustering performance than the alternative combination including ViT-L/14 and ViT-B/32 does. However, when all four pretrained models are incorporated, the clustering performance is not the most competitive. Overall, these findings demonstrate that not every additional view is guaranteed to be beneficial for MSRL. In particular, lower-quality pretrained models such as ViT-B/32 may negatively affect clustering performance. Consequently, the clustering performance of MSRL can decrease as the number of views increases if low-quality pretrained models are included, which highlights the importance of both view quality and the integration of model selection.

\section{Conclusion}
\label{sec:conclusion}
In this paper, we present the MSRL method for multiview self-representation learning across heterogeneous views. We introduce an information-passing mechanism that leverages self-representation learning to perform a feature aggregation operation on linear features, enabling information aggregation from spatially proximate neighbors. In addition, an assignment probability distribution consistency scheme is incorporated to enforce consistency among assignment probability distributions across multiple views. This approach enables complementary information to be effectively captured in multiview self-representation learning. The proposed MSRL method learns invariant representations across different linear models. Furthermore, we provide a theoretical analysis of the information-passing mechanism, the assignment probability distribution consistency and the incremental views. Extensive experiments on multiple benchmark visual datasets demonstrate that the proposed MSRL method consistently outperforms several state-of-the-art approaches.

\bibliographystyle{IEEEtran}

% Generated by IEEEtran.bst, version: 1.14 (2015/08/26)

\end{document}